\documentclass{article}

     \usepackage[preprint, nonatbib]{neurips_2019}

\usepackage[numbers]{natbib}

\usepackage[utf8]{inputenc} %
\usepackage[T1]{fontenc}    %
\usepackage{hyperref}       %
\usepackage{url}            %
\usepackage{booktabs}       %
\usepackage{amsfonts}       %
\usepackage{amsthm}
\usepackage{amsmath}
\usepackage{array}
\usepackage{nicefrac}       %
\usepackage{microtype}      %
\usepackage{color}
\usepackage{graphicx}
\usepackage{algorithmic}
\usepackage{xr}
\usepackage{enumitem}
\usepackage{float}

\newcommand{\eat}[1]{}

\newtheorem{theorem}{Theorem}
\newtheorem{corollary}[theorem]{Corollary}

\newtheorem{lemma}[theorem]{Lemma}

\theoremstyle{definition}

\newcommand{\defn}{:=}
\DeclareFontFamily{U}{mathx}{\hyphenchar\font45}
\DeclareFontShape{U}{mathx}{m}{n}{
      <5> <6> <7> <8> <9> <10>
      <10.95> <12> <14.4> <17.28> <20.74> <24.88>
      mathx10
      }{}

\DeclareMathOperator*{\argmin}{arg\,min}
\DeclareMathOperator*{\argmax}{arg\,max}

\def\E{{\ensuremath{\mathbb E}}}

\newcommand{\convp}{\ensuremath{\stackrel{p}{\to}}}
\long\def\comment#1{}

\newcommand{\acal}{\ensuremath{\mathcal A}}

\newcommand{\jcal}{\ensuremath{\mathcal J}}

\newcommand{\mcal}{\ensuremath{\mathcal M}}
\newcommand{\ncal}{\ensuremath{\mathcal N}}

\newcommand{\ycal}{\ensuremath{\mathcal Y}}

\title{Manifold Learning via Manifold Deflation}

\author{%
  Daniel Ting \\
  Tableau Research \\
  \texttt{dting@tableau.com} \\
   \And
  Michael I. Jordan \\
  University of California, Berkeley \\
  \texttt{jordan@cs.berkeley.edu} \\
}

\begin{document}

\maketitle

\begin{abstract}
  Nonlinear dimensionality reduction methods provide a valuable means to visualize and interpret high-dimensional data. However, many popular methods can fail dramatically, even on simple two-dimensional manifolds, due to problems such as vulnerability to noise, repeated eigendirections, holes in convex bodies, and boundary bias.
  We derive an embedding method for Riemannian manifolds that iteratively uses single-coordinate estimates to eliminate dimensions from an underlying differential operator, thus ``deflating'' it. These differential operators have been shown to characterize any local, spectral dimensionality reduction method.
  The key to our method is a novel, incremental tangent space estimator that incorporates global structure as coordinates are added. We prove its consistency when the coordinates converge to true coordinates. Empirically, we show our algorithm recovers novel and interesting embeddings on real-world and synthetic datasets.

\end{abstract}

\section{Introduction}
Manifold learning or nonlinear dimensionality reduction (NLDR) methods play an important role in understanding high-dimensional data across a wide range of disciplines such as computer science \cite{tang2015line}, medicine and biology \cite{becht2019dimensionality, esteva2017dermatologist}, and chemistry \cite{dsilva2013nonlinear}. Understanding the manifold structure underlying a model or the raw data can have significant implications for explanatory aspects of machine learning. For example, it can engender trust in models diagnosing cancer \cite{esteva2017dermatologist} or aid scientific discovery by helping biologists identify novel cell types not only via clustering of RNA sequence data, but also by biologically meaningful properties \cite{rostom2017interpretscRNA, haque2017practical}.  Unfortunately, despite two decades of effort, manifold learning is still not at the point that it can be used systematically in applications.  For many of the most popular methods, there is no theory that provides general conditions under which they yield suitable embeddings, and indeed there are known counterexamples.  Of particular concern, extant theory often requires a strong assumption that there is no noise, and the methods indeed often fail in the presence of noise.  These methods can fail dramatically with even small perturbations to the data, such as a modest rescaling or a modest amount of additive noise~\cite{goldberg2008manifold}.

In the current paper, we provide a new framework, and accompanying theory, that mitigates several of these problems.  The framework comprises two algorithmic components.  The primary component is \emph{Manifold Deflation}, a technique that iteratively reduces the dimension of the underlying manifold learning problem as coordinates are added. Each additional coordinate adds a constraint which ensures the next coordinate yields appropriate local geometry and identifies meaningful directions in the tangent space.  The second component is \emph{Vector Field Inversion}, which removes boundary bias by casting a coordinate as a solution to a regression problem that preserves local geometry rather than an eigenproblem that does not preserve geometric properties.  Working together, these two components provide a way to recover novel embeddings and solve or mitigate a number of failures of manifold learning due to scale, noise, holes, and boundaries.

Our framework is best viewed as a modification rather than a wholesale replacement of existing methods.  In particular, the framework requires the specification of a base NLDR method, which can be any of a wide class of local, spectral methods.  This includes some of the most popular NLDR methods, such as Locally Linear Embedding (LLE) \cite{roweis00LLE}, Laplacian Eigenmaps (LE) \cite{belkin2003laplacian}, Diffusion maps \cite{coifman2006diffusion}, Local tangent space alignment (LTSA) \cite{zhang2004principal}, and Hessian LLE (HLLE) \cite{HessEig}.  Used alone, these methods can and do fail, but, as we show, many of the failures can be attributed to a failure to ensure that output embedding coordinates jointly yield suitable local geometries, despite using statistics about local neighborhoods as inputs to generate the embedding.  Our framework provably avoids these failures.

Both Manifold Deflation and Vector Field Inversion are based on a recent dual characterization of local spectral methods, first in terms of a differential operator and a set of boundary conditions and second as a linear smoother \cite{tingnldr}. The original differential operator can be expressed in local coordinate systems as derivatives on $m$ variables where $m$ is the intrinsic dimension of the manifold. Manifold deflation removes one variable as each coordinate is added, thus ``deflating'' the problem into an $(m-1)$-dimensional problem.

Performing this deflation operation requires estimating fundamental objects for a manifold: the tangent vector field induced by a coordinate function and the tangent space at every point.  These are  inferential problems, and we accordingly provide statistical methodology for solving them, by showing how the tangent vector field can be represented as a linear smoother and estimated using local linear regression. We also provide statistical theory, establishing the consistency of this estimator.

\subsection{Related work}
Our methods augment any local, spectral NLDR method \cite{tingnldr}. As we have noted, many of the most popular NLDR methods fall in this class \cite{roweis00LLE, coifman2006diffusion, belkin2003laplacian, zhang2004principal, HessEig}, but there are other methods that do not.  These include Isomap \cite{bernstein2000graph}, t-SNE \cite{maaten2008tsne}, Maximum Variance Unfolding (MVU) \cite{weinberger2006unsupervised}, and UMAP \cite{mcinnes2018umap}.
 
We draw inspiration from work by \cite{goldberg2008manifold} and \cite{gerber2007robust}, who isolated a key structural problem underlying manifold learning algorithms: the problem of scale and repeated eigendirections. Recent work has attempted to address these problems by skipping eigenvectors or by ensuring the next coordinate cannot be predicted from other coordinates~\cite{chen2019selecting, blau2017non-redundant, dsilva2018parsimonious}.   Alternatively, \cite{gerber2007robust} proposed mapping an $m$-dimensional manifold to a lower dimensional manifold and recomputing the Laplacian.

There has been little work addressing boundary bias problems and non-geodesically convex manifolds with holes. Theoretical analyses of methods such as Isomap and MVU \cite{arias2013convergence} assume convexity. While HLLE \cite{HessEig} and the asymptotically equivalent LTSA \cite{zhang2017equivalence, tingnldr} avoid the convexity assumption and boundary bias when there is no noise, their behavior degenerates when there is noise (see Figure \ref{fig:scurve}). Work on handling noise off the manifold in NLDR methods focus on denoising the manifold in the ambient space \cite{hein2007manifold, fefferman2018fitting} or treating the problem as one of outlier detection \cite{chen2006robust}.

\section{Preliminaries}
\label{sec:preliminaries}
Consider an $m$-dimensional compact Riemannian manifold $\mcal$  with smooth boundary smoothly embedded in a high-dimensional ambient space $\mathbb{R}^d$ with $d \geq m$. All of the manifolds we consider in this paper will be assumed to satisfy these properties.
Given $n$ samples $y^{(1)}, \ldots, y^{(n)}$ from $\mcal$, potentially with noise in $\mathbb{R}^d$, we wish to find a good low-dimensional representation of $\mcal$. 
This representation is a mapping $\phi$ from $\mcal$ into $\mathbb{R}^{d'}$ that defines a set of global coordinate functions, $\phi_1(p), \ldots \phi_{d'}(p)$, where $p \in \mcal$. 

The main relevant concepts from differential geometry are the relationships among tangent vectors, differential operators, coordinate charts, and geodesic curves passing through a point. We briefly review these relationships.
Associated with each point $p\in \mcal$ is a tangent vector space $T_p(\mcal)$. Each unit tangent vector $v \in T_p(\mcal)$ is associated with a unique one-dimensional geodesic curve with unit geodesic length passing through $p$ in the direction $v$. In a local neighborhood, a geodesic curve is the unique shortest path connecting two points $p, q$. The point $q$ can be mapped to the vector $v$ tangent to the geodesic curve at $p$ with magnitude $\|v\|$ equal to the geodesic length. Expressing $v$ in a basis of $T_p(\mcal)$ gives a normal coordinate chart for points in a neighborhood of $p$.
 Finally, a tangent vector $v$ naturally defines a differential operator by the partial derivative $f(p) \mapsto \partial f(p) / \partial v$ in the normal coordinate chart. This defines a bijective relationship between tangent vectors and partial derivatives.

Given tangent vectors $v_p$ for all $p \in \mcal$, a vector field $V$ can thus be defined as the mapping $V : C^\infty(\mcal) \to C^\infty(\mcal)$ such that $V f(p) = \partial f(p) / \partial v_p$. Since derivatives are linear operators, finite approximations for $V$ can be represented as a matrix. Our statistical methods are based on estimating and exploiting this matrix. Figure \ref{fig:diff geom} illustrates these relationships.

\begin{figure}
    \hspace{-0.6cm} 
    \includegraphics[width=6in]{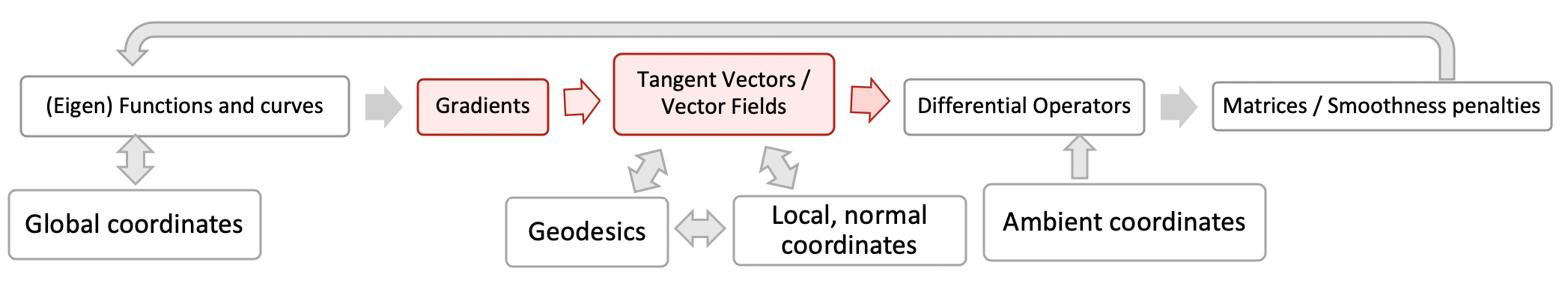}
    \caption{Relationships among differential geometry concepts. Pink denotes the areas our methods uniquely contribute to. These close the loop and allow for iterative refinement of coordinates.}
    \label{fig:diff geom}
\end{figure}

\subsection{Problems}
Given samples from an $m$-dimensional manifold $\mcal$, potentially corrupted with high-dimensional noise to form $\mcal + \mathcal{E}$, our goal is to generate an embedding with
coordinate functions $\phi_i : \mcal + \mathcal{E} \to \mcal' \subset \mathbb{R}^m$ that preserves tangent spaces in the following ways. 
The tangent vectors $\nabla \phi_i(p)$ form an orthogonal basis of $T_p(\mcal)$,
and furthermore the basis is orthonormal. If these conditions are perfectly satisfied at all points $p \in \mcal$, then the embedding is an \emph{isometric embedding}. If only orthogonality is satisfied for all points, then one obtains an \emph{immersion}. As we will see, the latter weakening addresses the practical problem of how to generate good embeddings when local, spectral methods fail and permits discovery of novel, interesting embeddings that reveal the manifold's structure even when an isometric embedding is impossible to attain.

Situations under which local, spectral methods fail include simple problems of scale, non-convexity, and noise. These are illustrated in Figure \ref{fig:scurve}. 
If a manifold has a large aspect ratio---that is, it has underlying coordinates where the range of one coordinate is much longer than the others---multiple eigenvectors may encode the same coordinate direction. This problem is called the \emph{repeated eigendirections problem}~\cite{goldberg2008manifold, gerber2007robust}. Even on a simple two-dimensional s-curve, this problem leads to failures for Laplacian Eigenmaps.

Embeddings can also be changed dramatically by boundaries and holes. 
Some methods, such as LTSA, require estimating local tangent spaces and knowing the intrinsic dimension of the manifold. When a small amount of off-manifold additive noise is added, we prove in the supplementary material that it is impossible to estimate the tangent space when neighborhood sizes shrink to zero. Points are nearly uniformly distributed in a small ball so that meaningful tangent space directions cannot be differentiated from noise directions. Figure \ref{fig:scurve} shows even a small amount of noise causes LTSA to behave like the regular Laplacian.

\begin{figure}
    \hspace{-1.5cm}
    \begin{tabular}{c|c}
    \raisebox{1.2cm}[0pt][0pt]{\rotatebox{90}{Scale}} &  \hspace{-0.4cm}\includegraphics[width=6in,height=1.0in]{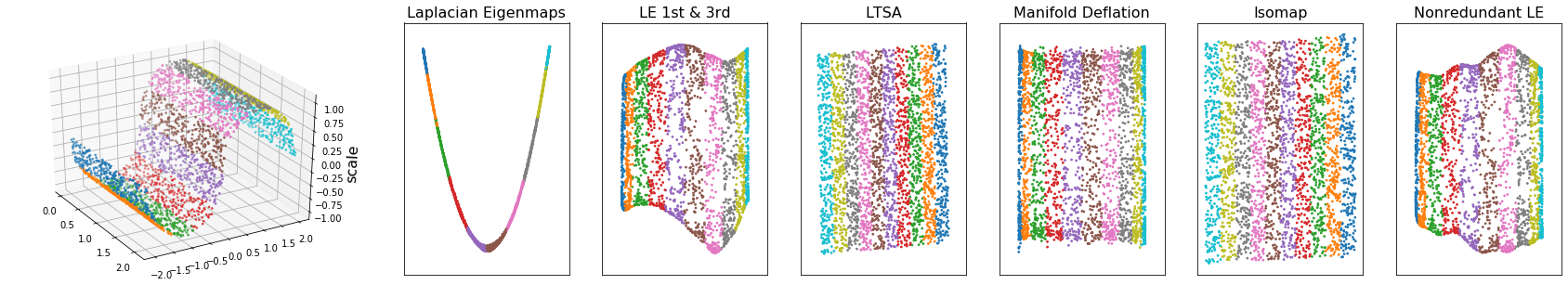} \\
    \raisebox{0.3cm}[0pt][0pt]{\rotatebox{90}{Non-convexity}} & \hspace{-0.4cm}\includegraphics[width=6in,height=1.0in]{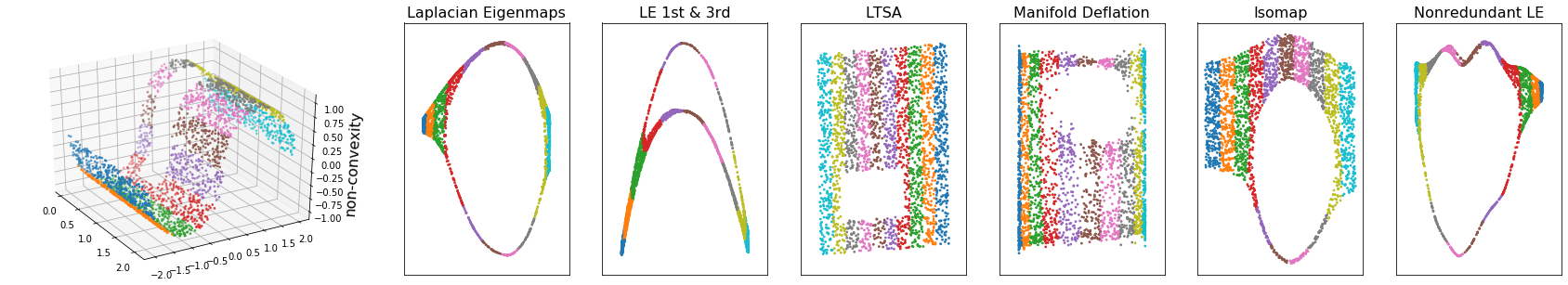}  \\ 
    \raisebox{1.2cm}[0pt][0pt]{\rotatebox{90}{Noise}}  & \hspace{-0.4cm}\includegraphics[width=6in,height=1.0in]{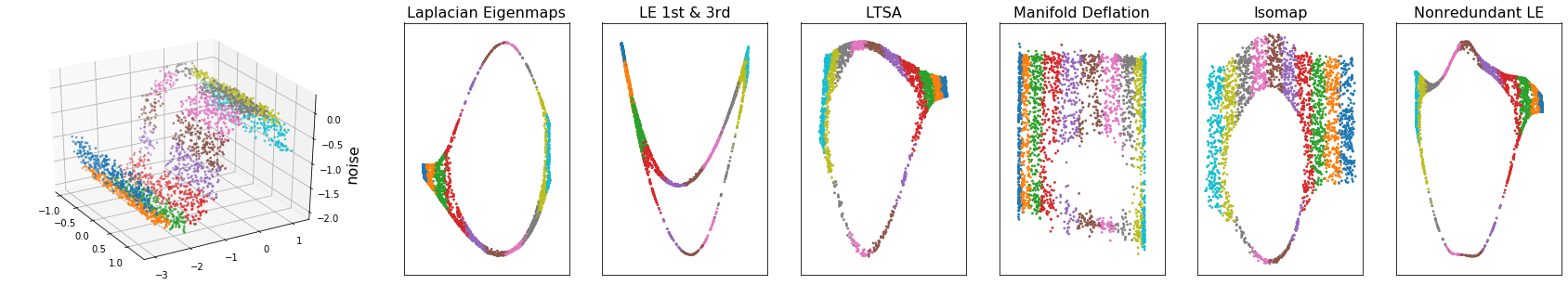} 
    \end{tabular}
    \caption{Scurve}
    \label{fig:scurve}
\end{figure}

\section{Estimators and Embeddings using Differential Operators}
We first briefly review local, spectral methods and their connections to partial differential equations (PDEs) on $m$ variables. We then show manifold deflation corresponds to eliminating one variable at a time from the PDE and describe the geometric connection to orthogonal tangent vectors.

Local, spectral methods consist of three main steps: 1) constructing a neighborhood graph, 2) constructing a sparse matrix $L^{(n)}$ from the local neighborhoods, and 3) solving an eigenproblem using $L^{(n)}$, where the eigenvectors correspond to the embedding. 
Recent work shows that for any local, spectral method, if $L^{(n)}$ converges as neighborhoods shrink in radius, then it must converge to a differential operator and the domain of the operator is specified by a set of boundary conditions~\cite{tingnldr}. For example,  the negative weighted Laplace-Beltrami operator is the limit differential operator for Laplacian Eigenmaps \cite{belkin2003laplacian} and Diffusion Maps \cite{coifman2006diffusion}. Concretely, when points are sampled uniformly from the manifold, the following unweighted Laplace-Beltrami operator is the limit operator, and the domain imposes a Neumann boundary condition:
\begin{align}
 \label{eqn:laplace-beltrami}
    -L^{(n)} f \to \Delta f &= \sum_{i=1}^{m} \frac{\partial^2 f}{\partial x_i^2},\quad \mbox{for}\quad f \in dom(\Delta) = \left\{g : \frac{\partial g}{\partial \nu_p} = 0, \quad\forall p \in \partial \mcal \right\}.
 \end{align} 
 Here, $\nu_p$ is a tangent vector normal to the boundary at $p \in \mcal$. The partial derivatives are taken with respect to local, normal coordinate charts, and convergence is pointwise under the uniform norm. 
Thus, the limit eigenproblem yields estimated coordinates $\hat{\phi} \in dom(\Delta)$ that satisfy the PDE on $m$ variables, $\sum_{i=1}^{m} \frac{\partial^2 \hat{\phi}}{\partial x_i^2} = \lambda \phi$, for some $\lambda$ in a neighborhood with normal coordinates denoted by $\mathbf{x}$. %

\subsection{Manifold deflation}
\label{sec:manifold deflation}
The main idea behind manifold deflation is to eliminate one coordinate from the differential operator. The solutions of the modified differential equations are then functions of the remaining coordinates, thus ``deflating'' the problem from one on $m$ dimensions to one on $m-1$ dimensions.

 We do this by restricting the domain of the operator. Given a coordinate function $\phi_k(p)$, then, with a slight abuse of notation, we impose the constraint that $\partial f(p) / \partial \phi_k(p) = 0$. In Euclidean space this encodes the fact that in Cartesian coordinates the coordinate axes are perpendicular to each other. Correspondingly, the gradients of the coordinate functions are orthogonal, and the directional derivatives satisfy $\partial \phi_j(p) / \partial \phi_k = 0$ whenever $j \neq k$.
 Imposing this constraint forces any term in the differential operator containing the coordinate $\phi_k$ to be zero, so it can be eliminated. 
 
 To formally define the constraint, note that the gradients $\nabla \phi_k(p)$ generate a collection of tangent vectors on $\mcal$. From Section \ref{sec:preliminaries}, each tangent vector corresponds to a directional derivative in a local normal coordinate system. This defines a vector field $\mathbf{V}_k$ mapping a function to its directional derivative at each point.  Thus, the constraint can be expressed as $\mathbf{V}_k f = 0$. 
 We can now formalize our notation, by defining
 $\partial f(p) / \partial \phi_k \defn \|\nabla \phi_k(p)\|^{-2} V_k(p) f$
 as a rescaling of $V_k$.
 
Given a local, spectral method that converges to a positive semidefinite operator $L^\infty$, manifold deflation iteratively solves the following:
 \begin{align}
     \phi_{k+1} &= \argmin_{\substack{f \in dom(L^\infty) \cap \Omega_k \\ 
                                    \langle f, 1  \rangle = 0, \, 
                                    \| f \| = 1}}
                        f^T L^\infty f \quad \mbox{where }\quad 
    \Omega_i = \left\{f \in C^{\infty}(\mcal): V_j f = 0\quad \forall j \leq k\right\}.
\end{align}
 
We note the similarity to matrix deflation methods in numerical linear algebra. The only difference is that matrix deflation imposes an orthogonality constraint $\Omega_k = (\mathrm{span} \{\phi_j : j \leq k\})^\perp$  instead of the vector field constraint. The former are constraints in a function space while the latter are constraints in local neighborhoods and have a geometric interpretation.
The hard constraints on the derivative can be expressed as soft constraints as follows: 
\begin{align}
\label{eqn:soft constraints}
        \phi_{k+1}^{(\lambda)} &= \argmin_{\substack{f \in dom(L^\infty) \cap \Omega_k \\ 
                                    \langle f, 1  \rangle = 0, \, 
                                    \| f \| = 1}}
                        \langle f, L^\infty f\rangle + \lambda \sum_{j \leq i}\| V_j f \|_{L_2(\nu)}^2,
\end{align}
for any measure $\nu$ on $\mcal$ with density bounded away from 0 and $\infty$.
This yields the algorithm:
\begin{enumerate}[noitemsep]
    \item Take as input $L$ from any local, spectral method and a regularization parameter $\lambda$. 
    \item Compute the bottom, non-constant eigenvector $\phi_k$ of $L + \lambda \sum_{j=1}^{k-1} \hat{V}_j^T \hat{V}_j$.
    \item Add $\phi_k$ to the list of coordinates and estimate the matrix valued vector field $\hat{V}_k$ from $\phi_k$.
    \item Increment $k$ and repeat from step 2 until $k = dim(\mcal)$.
\end{enumerate}
    
\subsection{Vector field and tangent space estimation}
The main challenge in performing manifold deflation is estimating the vector field $\hat{V}_k$.
We show that this estimation problem can be reduced to a local linear regression, yielding our proposed Tangent Derivation Estimator.

First, we examine derivative estimation when a coordinate system is known, which we can solve by extending techniques from local polynomial regression \cite{fan1996localpolyregression}. Consider a Taylor expansion in a local, normal coordinate system for a ball $B(p_0,h)$. A point $q$ in the neighborhood has coordinates $x_1(q), \ldots, x_m(q)$, so that
 \begin{align}
     f(q) &= f(p_0) + \sum_{i=j}^{m} \frac{\partial f(p_0)}{\partial x_j}x_j(q) + \epsilon_q,
 \end{align}
 where $\epsilon_q = O(\|h\|^2)$.
Given points $\ncal = \{q_1, \ldots q_\ell\}$ in a small ball around $p_0$ on the manifold and a function $f$, the partial derivative with respect to $x_i$ can be estimated by a local linear regression where the $\ell \times (m+1)$ matrix of regressors is given by the coordinates $M_{ai} = x_i(q_a)$ and the intercept $M_{a,m+1} = 1$. Writing $\beta_k = \frac{\partial f(p_0)}{\partial x_k}$, the derivative estimate is the regression coefficient $\hat{\beta}_k(f) = e_{k}^T (M^T M)^{-1} M^T S f_\ncal \approx \frac{\partial f(p_0)}{\partial x_j}$, where $f_\ncal = (f(q_1), \ldots, f(q_\ncal))$.

Extending this derivative estimator to the case of manifold deflation requires computing an estimate when only a subset of $k$ coordinates out of $m$ are known. We show in the supplementary material that the unknown coordinates can be ignored
when the sampled neighborhoods are sampled uniformly from symmetric balls contained in the interior of $\mcal$. In this case, the normal coordinate functions in the neighborhood asymptotically have zero mean and are orthogonal. Since the unknown coordinates' regressors are orthogonal to the $k^{th}$ coordinate's regressor, they can be ignored. This also implies that other known coordinates can be ignored and the derivative can be estimated using a simple linear regression that depends only on the $k^{th}$ coordinate. 

We now define the Tangent Derivation Estimator.
Let $\mathcal{A}$ be the indices for the $a^{th}$ point's neighbors and let $\phi_{k, \mathcal{A}}$ denote the values of the coordinate of interest, $\phi_k$, evaluated at the $\mathcal{A}$ neighbors. Let $\overline{\phi}_{k, \mathcal{A}}$ be the mean of those values.
A simple linear regression in each neighborhood gives that vector field can be estimated with an $n \times n$ matrix $\hat{V}_k$, where
\begin{align}
    \hat{V}_{k, a, \acal} &= (\phi_{k,\acal} - \overline{\phi}_{k,\acal}) / \|\phi_{k,\acal} - \overline{\phi}_{k,\acal}\|^2. \label{eqn:vf approx}
\end{align}
Here $\hat{V}_{k, i, J}$ denotes the entries indexed by $J$ in the $i^{th}$ row. This estimator has the consistency property given in Theorem \ref{thm:derivative consistency}. The derivation of the estimator given above sketches the proof that it converges to the correct limit.

\begin{theorem}[Consistency of vector field estimate]
\label{thm:derivative consistency}
    Let $\ycal^{(n)}$ be a uniform sample from a smooth $m$-dimensional manifold $\mcal \subset \mathbb{R}^d$ and let the sequence
    $(\epsilon_n)$ decrease to zero. 
    Let  $\phi : \mcal \to \mathbb{R}$ be a smooth function such that $\inf_{p\in \mcal} \|\nabla \phi(p)\| > 0$. Given a smooth function $f$ with bounded gradient.
    Let $\hat{D}_{\phi,\epsilon_n}$ be the estimate of $\partial f / \partial \phi$ given by Eq.~\ref{eqn:vf approx} on an $\epsilon_n$-neighborhood graph.
    Then, for some constant $\alpha$ and for any $h > 0$, if $\epsilon_n \to 0$ sufficiently slowly as $n \to \infty$,
    \begin{align}
    \label{eqn:consistency}
        \sup_{\substack{p \in \mcal \\ d(p, \partial \mcal) > h}} \left|\frac{\partial f}{\partial \phi}(p) - \alpha \hat{D}_{\phi,\epsilon_n} (p)\right| \convp 0.
    \end{align}
\end{theorem}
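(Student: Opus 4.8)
The strategy is to reduce the claim to a uniform consistency statement for local linear regression on the manifold, handled via a bias--variance decomposition, and then to argue that the unknown/other coordinates contribute only lower-order terms because of the geometry of normal coordinates on symmetric balls. First I would fix a point $p$ with $d(p,\partial\mcal) > h$ and work in a normal coordinate chart on the $\epsilon_n$-ball $B(p,\epsilon_n)$, with coordinates $x_1,\dots,x_m$ chosen so that $x_1$ is aligned with $\nabla\phi(p)$. Writing the Taylor expansions
\begin{align}
\label{eqn:taylor-plan}
    f(q) &= f(p) + \sum_{j=1}^m \frac{\partial f(p)}{\partial x_j} x_j(q) + O(\epsilon_n^2), \qquad
    \phi(q) = \phi(p) + \|\nabla\phi(p)\|\, x_1(q) + O(\epsilon_n^2),
\end{align}
the estimator in Eq.~\ref{eqn:vf approx} is, up to the rescaling constant $\alpha$, exactly the slope of the simple linear regression of $f_\acal$ on $\phi_{k,\acal}$ in that neighborhood. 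So the object to control is $\big(\mathrm{Cov}_{\acal}(f,\phi)\big)\big/\big(\mathrm{Var}_{\acal}(\phi)\big)$, where $\mathrm{Cov}_\acal,\mathrm{Var}_\acal$ are empirical (within-neighborhood) moments.

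Next I would compute the population limit of these within-neighborhood moments. Since $\ycal^{(n)}$ is a uniform sample, as $\epsilon_n\to 0$ the points in $B(p,\epsilon_n)$ are asymptotically uniform on a symmetric Euclidean ball of radius $\epsilon_n$ in the normal chart; hence $\E[x_j]\to 0$, $\E[x_i x_j]\to 0$ for $i\neq j$, and $\E[x_j^2]\to c\,\epsilon_n^2$ for a dimension-dependent constant $c$. Plugging the expansions \eqref{eqn:taylor-plan} in, the leading term of the empirical covariance is $\|\nabla\phi(p)\|\,\frac{\partial f(p)}{\partial x_1}\,\E[x_1^2]$ --- all cross terms $\frac{\partial f(p)}{\partial x_j}\E[x_1 x_j]$, $j\neq 1$, vanish in the limit, which is precisely why the other and unknown coordinates can be ignored (as claimed in the text preceding the theorem and proved in the supplement). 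The leading term of the empirical variance of $\phi$ is $\|\nabla\phi(p)\|^2\,\E[x_1^2]$. Dividing, the ratio converges to $\frac{1}{\|\nabla\phi(p)\|}\frac{\partial f(p)}{\partial x_1} = \frac{\partial f}{\partial\phi}(p)$ up to the fixed scalar $\alpha$ absorbing the discretization constant $c$; the hypothesis $\inf_p\|\nabla\phi(p)\|>0$ guarantees the denominator is bounded away from zero uniformly, so no blow-up occurs. The $O(\epsilon_n^2)$ remainders in \eqref{eqn:taylor-plan} feed into $O(\epsilon_n)$ bias in the slope, which vanishes.

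Finally I would upgrade pointwise convergence to the uniform-in-$p$ statement of Eq.~\ref{eqn:consistency}. This is where I expect the main obstacle: I need the empirical within-neighborhood moments to concentrate around their population values uniformly over the (infinitely many) centers $p$ with $d(p,\partial\mcal)>h$, and simultaneously over $n$. The plan is a standard empirical-process / covering argument: the region $\{d(\cdot,\partial\mcal)>h\}$ is compact, the map $p\mapsto(\text{neighborhood moments})$ is Lipschitz at scale $\epsilon_n$, and the number of points falling in each ball is $\Theta(n\epsilon_n^m)$ by uniformity, so a Bernstein/Hoeffding bound gives fluctuations of order $(n\epsilon_n^m)^{-1/2}\log n$ after a union bound over an $\epsilon_n$-net of size $O(\epsilon_n^{-m})$. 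Requiring $n\epsilon_n^m/\log n\to\infty$ --- i.e. $\epsilon_n\to 0$ ``sufficiently slowly,'' exactly the hypothesis of the theorem --- makes both the stochastic term and the $O(\epsilon_n)$ bias vanish uniformly, and since the denominator $\mathrm{Var}_\acal(\phi)$ is uniformly bounded below (again using $\inf_p\|\nabla\phi(p)\|>0$ together with nondegeneracy of the symmetric-ball second moment), the ratio converges uniformly in probability. Combining the uniform bias bound and the uniform variance bound yields Eq.~\ref{eqn:consistency}. The one delicate point beyond bookkeeping is justifying that ``asymptotically uniform on a symmetric ball'' is legitimate uniformly in $p$ despite curvature of $\mcal$ --- curvature perturbs the volume element and the chart by $O(\epsilon_n^2)$, which I would absorb into the same remainder terms, using compactness of $\mcal$ to get uniform curvature bounds.
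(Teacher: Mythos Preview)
Your bias analysis is essentially identical to the paper's: both Taylor-expand $f$ and $\phi$ in normal coordinates, invoke the asymptotic orthogonality and equal second moments of the coordinate functions on symmetric balls (this is the paper's Lemma~\ref{lem:orthogonal normal coord}, which you sketch informally), and conclude that the slope estimator converges to $\langle\nabla f,\nabla\phi\rangle/\|\nabla\phi\|^2=\partial f/\partial\phi$ with $O(\epsilon_n^2)$ remainder, uniformly in $p$ via compactness.

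The genuine difference is in how you lift pointwise convergence to uniform convergence in probability. The paper argues that the families $\{f\psi_{p,\epsilon}\}_{p,\epsilon}$ and $\{\psi_{p,\epsilon}^2\}_{p,\epsilon}$ are VC-classes (products of a fixed function with indicators of balls), hence BUEI and $P$-Glivenko--Cantelli; this gives uniform convergence of numerator and denominator for each fixed $\epsilon$, and a diagonalization in $(\epsilon_t,n_t)$ then sends $\epsilon\to 0$. Your route is more elementary: Bernstein/Hoeffding at each center, union bound over an $\epsilon_n$-net of size $O(\epsilon_n^{-m})$, and a Lipschitz interpolation from the net to all $p$. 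Your approach buys an explicit rate condition $n\epsilon_n^m/\log n\to\infty$, whereas the paper's ``sufficiently slowly'' comes out of the diagonalization and is not quantified; on the other hand, the VC-class argument avoids the net-to-everywhere interpolation step, which in your sketch is the place most in need of care (the within-neighborhood moments scale like $\epsilon_n^2$ in the denominator, so Lipschitz continuity in $p$ has to be stated after the right rescaling).
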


Since for any $\epsilon > 0$, the estimator $\hat{D}_{\phi,\epsilon}$ is a continuous function of $\phi$ under the uniform norm, the continuous mapping theorem extends the theorem to estimated coordinates $\phi^{(n)}$ that converge to  $\phi$.
\begin{corollary}
 Let $\phi^{(n)}$ be a sequence of smooth, real valued functions such that 
 $\phi^{(n)} \to \phi$ under the uniform norm. 
 Then, under the conditions of theorem \ref{thm:derivative consistency}, Eq. \ref{eqn:consistency} holds with 
 $\hat{D}_{\phi,\epsilon_n}$ replaced with $\hat{D}_{\phi_n,\epsilon_n}$.
\end{corollary}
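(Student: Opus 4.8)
The idea is to reduce the statement to Theorem~\ref{thm:derivative consistency} by making quantitative the continuity of $\psi \mapsto \hat D_{\psi,\epsilon}$ that the remark above invokes, so that it yields convergence \emph{uniformly} in $p$. Write $\delta_n \defn \|\phi^{(n)} - \phi\|_\infty$, so $\delta_n \to 0$. The triangle inequality gives, for every $p$,
\[
\left|\frac{\partial f}{\partial\phi}(p) - \alpha\hat D_{\phi^{(n)},\epsilon_n}(p)\right| \le \left|\frac{\partial f}{\partial\phi}(p) - \alpha\hat D_{\phi,\epsilon_n}(p)\right| + \alpha\left|\hat D_{\phi,\epsilon_n}(p) - \hat D_{\phi^{(n)},\epsilon_n}(p)\right|.
\]
Taking the supremum over $\{p \in \mcal : d(p,\partial\mcal) > h\}$, the first term tends to $0$ in probability by Theorem~\ref{thm:derivative consistency}, so it suffices to show that the perturbation term $R_n \defn \sup_{p:\,d(p,\partial\mcal)>h}\big|\hat D_{\phi,\epsilon_n}(p) - \hat D_{\phi^{(n)},\epsilon_n}(p)\big|$ converges to $0$ in probability.

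For this, fix a realization of $\ycal^{(n)}$ and a sample point $p_a$ with neighbor index set $\acal$. By Eq.~\ref{eqn:vf approx} applied to $f$, the estimate at $p_a$ is the ratio $\hat D_{\psi,\epsilon_n}(p_a) = \langle \psi_\acal - \overline\psi_\acal\mathbf 1,\, f_\acal\rangle / \|\psi_\acal - \overline\psi_\acal\mathbf 1\|^2$, a function of the finite vector $\psi_\acal = (\psi(q))_{q\in\acal}$ that is smooth wherever its denominator is nonzero. Using $\|\phi^{(n)}_\acal - \phi_\acal\|_\infty \le \delta_n$, the identity $\langle u, f_\acal\rangle = \langle u, f_\acal - f(p_a)\mathbf 1\rangle$ for any centered $u$, the neighborhood oscillation bounds (the oscillations of $f$ and of $\phi$ over $\acal$ are each $O(\epsilon_n)$ since their gradients are bounded), and the elementary bound for a difference of two ratios, one obtains $R_n \le C\,\delta_n/\epsilon_n$ on the event
\[
E_n \defn \Big\{\, \|\phi_\acal - \overline\phi_\acal\mathbf 1\|^2 \ge c_0\,|\acal|\,\epsilon_n^2 \ \text{ for every interior sample point } p_a \,\Big\},
\]
where $c_0 > 0$ and $C < \infty$ depend only on $\mcal$, the gradient bound on $f$, and $\inf_{p}\|\nabla\phi(p)\| > 0$. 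The factor $1/\epsilon_n$ is intrinsic: the denominator is of order $|\acal|\epsilon_n^2$, so a uniform perturbation of $\phi$ of size $\delta_n$ moves numerator and denominator by a relative amount of order $\delta_n/\epsilon_n$.

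It then remains to control $E_n$ and to choose the rates. Under uniform sampling the interior $\epsilon_n$-neighborhoods are asymptotically symmetric $\epsilon_n$-balls, and the same uniform law of large numbers over the $O(n)$ neighborhoods that underlies Theorem~\ref{thm:derivative consistency} shows the empirical variance of $\phi$ over each such neighborhood concentrates around $\tfrac{1}{m+2}\|\nabla\phi(p_a)\|^2\epsilon_n^2 \ge \tfrac{1}{m+2}\big(\inf_p\|\nabla\phi(p)\|\big)^2\epsilon_n^2 > 0$; hence $\Prob(E_n) \to 1$. Finally, since Theorem~\ref{thm:derivative consistency} only constrains $\epsilon_n$ from below while $\delta_n \to 0$ is given, we may still take $\epsilon_n \to 0$ slowly enough that in addition $\delta_n = o(\epsilon_n)$ (e.g. $\epsilon_n = \max(\underline\epsilon_n, \sqrt{\delta_n})$ for the theorem's lower rate $\underline\epsilon_n$). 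Then $R_n \le C\delta_n/\epsilon_n \to 0$ on $E_n$, so $R_n \convp 0$, completing the argument; extending $\hat D$ off the sample by interpolation, the identical neighborhood-wise bounds give the supremum over all interior $p$, not merely over sample points.

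The step I expect to be the main obstacle is the uniform-in-$p$, high-probability lower bound on the denominator $\|\phi_\acal - \overline\phi_\acal\mathbf 1\|^2$, i.e.\ the event $E_n$: it is exactly what makes the ratio estimator Lipschitz in its coordinate argument with a \emph{controlled} constant $C/\epsilon_n$, and establishing it needs the same machinery (asymptotic symmetry of interior neighborhoods and a uniform law of large numbers over all $O(n)$ neighborhoods) that powers Theorem~\ref{thm:derivative consistency}. A secondary subtlety is the interplay of limits: the estimated-coordinate version genuinely requires $\epsilon_n \to 0$ slowly relative to $\|\phi^{(n)} - \phi\|_\infty$, so the bare ``continuous mapping'' remark (continuity of $\psi \mapsto \hat D_{\psi,\epsilon}$ at fixed $\epsilon$) must be upgraded to this rate-aware modulus of continuity to yield the stated uniform convergence.
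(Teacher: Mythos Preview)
Your plan is correct and is, in fact, more careful than the paper's own treatment. The paper disposes of the corollary in a single sentence preceding its statement: ``Since for any $\epsilon > 0$, the estimator $\hat{D}_{\phi,\epsilon}$ is a continuous function of $\phi$ under the uniform norm, the continuous mapping theorem extends the theorem to estimated coordinates $\phi^{(n)}$ that converge to $\phi$.'' There is no separate proof of the corollary beyond this remark.

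Your approach is genuinely different in that you make the continuity \emph{quantitative} via a perturbation bound $R_n \le C\,\delta_n/\epsilon_n$ on a high-probability event, and then couple the rate of $\epsilon_n$ to $\delta_n = \|\phi^{(n)}-\phi\|_\infty$. The paper's one-liner asserts continuity at each fixed $\epsilon$ and invokes the continuous mapping theorem, but, as you correctly flag in your final paragraph, continuity at fixed $\epsilon$ does not by itself control the joint limit when $\epsilon_n \to 0$: the modulus of continuity of $\psi \mapsto \hat D_{\psi,\epsilon}$ blows up like $1/\epsilon$, so without the rate condition $\delta_n = o(\epsilon_n)$ the argument is incomplete. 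Your decomposition, the lower bound on the denominator via the event $E_n$ (which does follow from the same Glivenko--Cantelli machinery the paper uses in the proof of Theorem~\ref{thm:derivative consistency}), and the freedom to slow $\epsilon_n$ further to absorb $\delta_n$ are exactly what is needed to close that gap. What your route buys is an honest proof; what the paper's route buys is brevity at the cost of hiding the rate interaction you identified.
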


 \subsection{Example: Laplacian Eigenmaps}
 To illustrate manifold deflation, consider Laplacian Eigenmaps on a uniform sample drawn from a 3D strip,
$[0,9\pi] \times [0, 3\pi] \times [0, \pi] \subset \mathbb{R}^3$. Ignoring the non-smoothness at the corners, this method asymptotically generates the Laplace-Beltrami operator given in Eq.~\ref{eqn:laplace-beltrami}, with boundary conditions
 \begin{align}
 \label{eqn:laplace-beltrami boundary}
    \frac{\partial f(p)}{\partial x_i} = 0 \quad \mbox{if } p_i \in \{0,b_i \pi\} \quad 
    \mbox{where $b_1 = 9, b_2 = 3, b_3=1$}.
 \end{align}

The resulting PDE is separable, such that each coordinate can be treated as an ordinary differential equation (ODE), $-\frac{\partial^2 f}{\partial x_i^2} = \lambda f$, with $\frac{\partial f(0)} {\partial x_i} = \frac{\partial f(b_i)}{\partial x_i} = 0$. The eigenfunctions for this ODE are given by $\cos\left( j x_1 / b_i\right)$ 
for $j \in \mathbb{N}^0$. %

The resulting eigenfunctions for the PDE include these eigenfunctions and their products. The first three nonzero eigenvalues and their corresponding eigenfunctions are given by

\begin{tabular}{lllll}
Eigenvalue & $1/9^2$ & $(2/9)^2$ & $1/3^2$ & $1/3^2 + 1/9^2$  \\ 
Eigenfunctions & $\cos\left( \frac{x_1}{9} \right)$ & $\cos\left( \frac{2x_1}{9} \right)$ & $\cos\left( \frac{x_1}{3} \right), \, \cos\left( \frac{x_2}{3} \right)$ &
$\cos\left( \frac{x_1}{9} \right) \cos\left( \frac{x_2}{3} \right)$ 
\end{tabular}

Thus, the Laplacian Eigenmaps method chooses a function of only $x_1$ for the first three coordinates. A further computation shows that the first 17 eigenfunctions will be solely functions of $x_1$ and $x_2$, so that $x_3$ is not picked up.
Note that the second time $x_2$ appears is in the cross-term
$\cos(x_1/9) \cos(x_2/3)$. This has an eigenvalue that is close to the eigenvalue of $\cos(x_2/3)$ which can cause commonly observed mixing of the eigenvectors when points are sampled randomly. In this case, methods using a predictability heuristic~\cite{blau2017non-redundant, dsilva2018parsimonious} or eigenvector skipping \cite{chen2019selecting} may fail to appropriately distinguish the eigenvectors. We are unaware of work that addresses this form of repeated eigendirections problem. 
On the other hand, manifold deflation eliminates all other eigenfunctions containing $x_1$, including the cross-term $\cos(x_1/9) \cos(x_2/3)$. If the  coordinate $x_3$ with the shortest range is considered a noise dimension, manifold deflation yields a two-dimensional embedding that ignores this noise.

\subsection{Boundary bias and the Vector Field Inversion embedding}
The estimated vector fields can also be used outside of the manifold deflation procedure. We demonstrate one use in removing boundary bias.
Methods that impose a Neumann boundary condition, such as Laplacian Eigenmaps, cannot recover an isometric embedding due to boundary bias~\cite{tingnldr}. 
Trivially, every coordinate function in an isometric embedding must have a gradient with unit norm everywhere. However, the Neumann boundary condition implies the gradient must be zero at some point on the boundary. 

Suppose there exist global coordinate functions $\phi_1, \ldots, \phi_m$ that generate an isometric embedding of the manifold $\mcal$.
The vector field $V_k := \partial / \partial \phi_k$ satisfies $V_k \phi_k = 1$ where $1$ denotes the constant function equal to one. 
To obtain a bias-corrected coordinate, assume $\phi_k$ lies in some reproducing kernel Hilbert space (RKHS). Applying kernel ridge regression to the problem $1 = V_k \phi_k + \epsilon$ yields:
\begin{align}
\label{eqn:debiasing}
    \tilde{\phi}_k &= K (K V_k^T V_k K + \alpha I)^{-1} K V_k^T 1,
\end{align}
for some kernel $K$ and regularizer $\alpha$.
We note that naive matrix inversion of the finite matrix approximation to $V_k$ will not yield meaningful results. We posit this is because the continuous, limit operator $V_k$ is not invertible, since the inverse is not bounded.

\vspace{-0.2cm}
\subsection{Space and time complexity}
Manifold deflation consists of two phases: 1) estimating a vector field $V_k$ and penalty matrix $V_k^T V_k$, and 2) estimating the next coordinate. 
The vector field estimator given in Eq.~\ref{eqn:vf approx} takes time $O(|E|)$ as it simply copies and rescales values from the coordinate into a sparse matrix with $|E|$ nonzero entries where $E$ are the edges in the neighborhood graph.
The number of nonzero entries in the penalty is equal to the number of node pairs two hops or less away in the neighborhood graph. Note there is no dependence on the dimension of the original data or on the intrinsic dimension for the steps unique to manifold deflation. Computing the second smallest eigenvector of a positive semi-definite sparse matrix can be done efficiently with iterative algorithms such as LOBPCG \cite{Knyazev2001TowardTO} or the Lanczos method with inverse iterations.

\vspace{-0.1cm}
\section{Experiments}
We demonstrate the properties of manifold deflation and vector field inversion with experiments on three datasets: an S-curve, a sphere, and the Fashion MNIST dataset. 
In each experiment, we use Laplacian Eigenmaps with an unnormalized Laplacian as the base method for manifold deflation and k-nearest neighbors with 15 neighbors to define the adjacency graph. Additional details about the experiment as well as additional results are included in the supplementary material. 

The S-curve is an isometric embedding of a $3 \times 1$ rectangle in $\mathbb{R}^2$ into an S-shaped curve in $\mathbb{R}^3$. Figures \ref{fig:scurve} shows the behavior of NLDR methods when various challenges are incorporated.  When there is a hole or additive noise, only manifold deflation preserves the two-dimensional structure on the S-curve. Other methods fail despite the small magnitude of the noise, which is additive noise drawn uniformly on a cube with half-width $0.1$.  Non-redundant spectral embedding or methods that skip eigenvectors to deal with the repeated eigendirections problem yield worse embeddings and cannot deal with holes. The supplementary material shows that other combinations of eigenvectors still yield poor embeddings so any method that selects a subset of eigenvectors does poorly.

Due to its use of Laplacian Eigenmaps, manifold deflation exhibits boundary bias in figure \ref{fig:debias} when embedding an S-curve with additive noise. Vector field inversion debiases the coordinates near the boundary to yield strips with uniform width. 

Manifold deflation can also recover embeddings that are useful and qualitatively different from other methods. Figure \ref{fig:sphere} shows that it recovers a rescaling of polar coordinates on a sphere. Points on the sphere are selected from a spherical Fibonacci lattice \cite{hannay2004fibonacci} which deterministically generates uniformly spaced points on a sphere. The north-south and east-west axes are  slightly stretched to make the order of eigenvectors deterministic. Thus, if applied to the globe, manifold deflation approximately recovers a rectangular map. By comparison, all other local, spectral methods do not yield meaningful dimensionality reduction and simply return  a linear projection.

Figure \ref{fig:fmnist} shows embeddings of the Fashion MNIST dataset. Laplacian Eigenmaps essentially collapses the global structure into a collection of one dimensional manifolds. Manifold deflation generates an embedding with interesting manifold structure. The figure shows the embedding on different types of shoes. 
For Laplacian Eigenmaps, it is unclear what can learned from the visualization. Manifold deflation, on the other hand, clearly places darker shades to the left and sandals to the right. Shoes with high tops are close to the bottom while those with low profiles are near the top. The supplementary material provide additional plots comparing manifold deflation to Laplacian Eigenmaps.
We note that while methods such as t-SNE and UMAP provide visualizations that are adept at uncovering the cluster structure of the data, manifold deflation yields insight into the continuous manifold structure of the data.

\begin{figure}
\vspace{-1cm}
\hspace{-1cm}
\begin{tabular}{cc}
    \hspace{-0.2cm}\includegraphics[width=3.0in, height=2.9in]{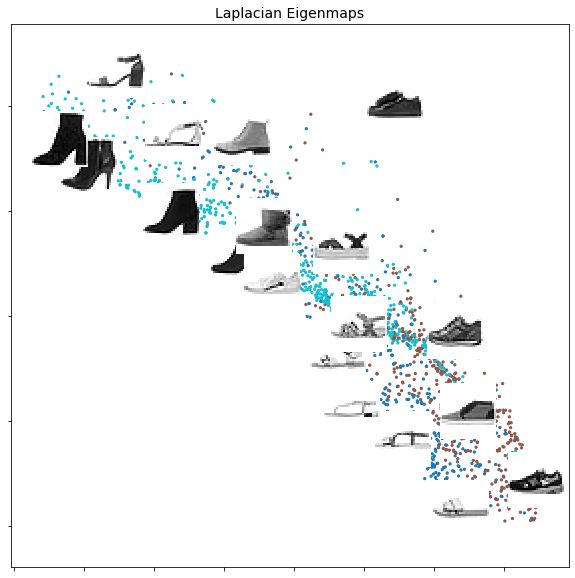} &
    \hspace{-0.3cm}\includegraphics[width=3.3in, height=2.9in]{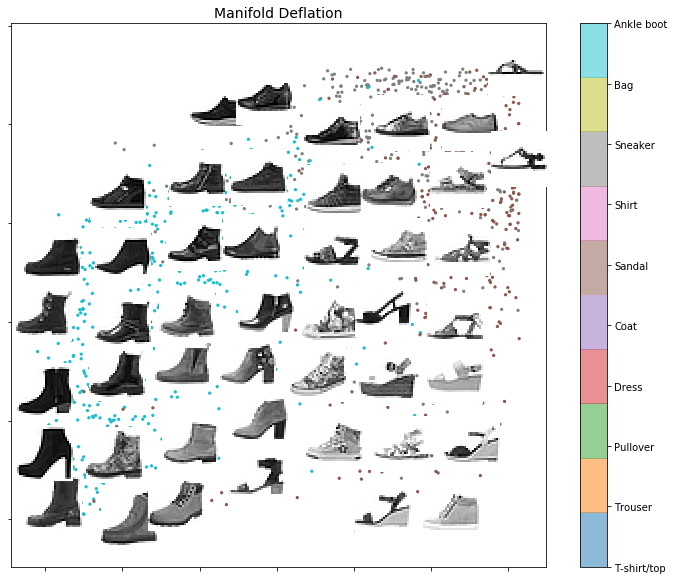} \\
    \hspace{-0.3cm}\includegraphics[width=3.0in,height=1.5in]{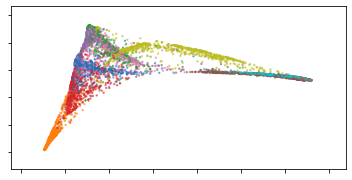} &
    \hspace{-0.3cm}\includegraphics[width=3.3in,height=1.5in]{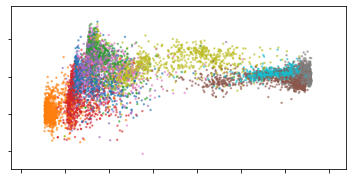}
    \end{tabular}
    \caption{Embedding of the Fashion MNIST dataset colored by category (bottom) and restricted to footwear (top). Manifold deflation avoids collapsing the manifold and reveals interesting variation.}
    \label{fig:fmnist}
\end{figure}

\begin{figure}
    \hspace{-1cm}\includegraphics[width=2.1in, height=1.4in]{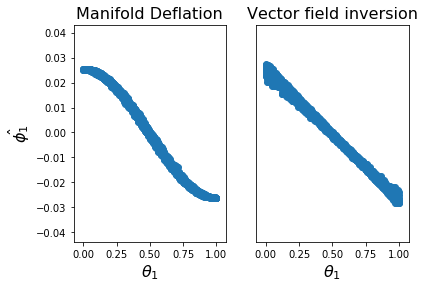}
    \includegraphics[width=4.2in, height=1.4in]{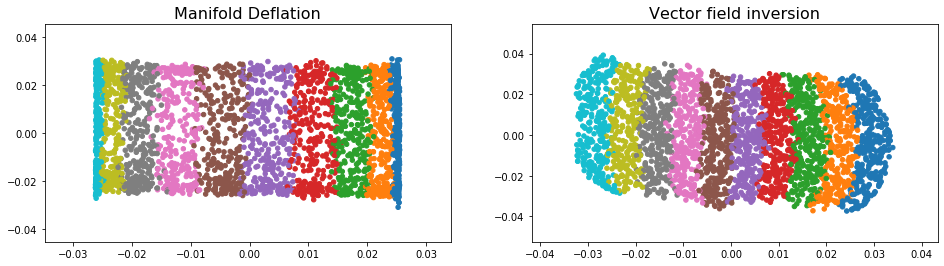}
        \caption{The estimated tangent vector field can be used to debias the coordinate estimates from manifold deflation on an S-curve with noise. The left plots show the first estimated coordinate function versus the true underlying coordinate. Vector field inversion ensures that this relationship is linear; accordingly, the strips in the right plots are of even width.}
    \label{fig:debias}
\end{figure}

\begin{figure}
    \hspace{-0.5cm}\includegraphics[width=6in, height=1.2in]{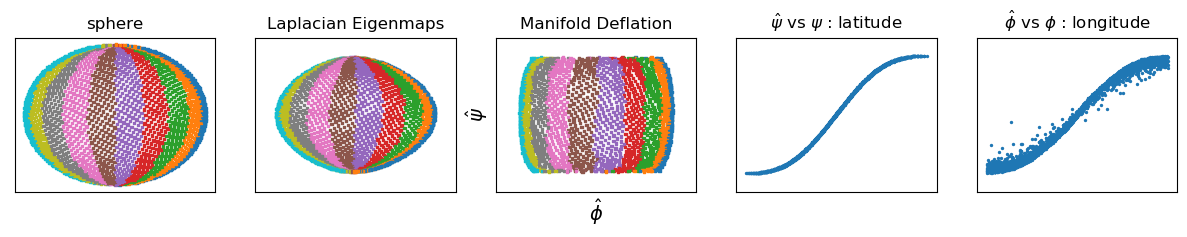}
        \caption{When embedding the surface of a sphere in $\mathbb{R}^2$, manifold deflation recovers polar coordinates (latitude, longitude) for each hemisphere. Laplacian Eigenmaps and other local, spectral methods linearly project into two dimensions. The left plots show slices of longitude values map roughly to vertical slices in the manifold deflation embedding. The right plots show that each recovered coordinate is a monotone transformation of the corresponding polar coordinate.} 
    \label{fig:sphere}
\end{figure}

\vspace{-0.1cm}
\section{Conclusions}
We have proposed a novel framework for manifold learning and nonlinear dimensionality reduction based on Manifold Deflation and Vector Field Inversion. We demonstrate the effectiveness of this framework by successfully recovering a manifold's structure in cases where a local, spectral method will provably fail. 
The common thread among the components of our framework is that they encode information or enforce geometric constraints on coordinates in the coordinate-free language of differential operators that are the foundation of local, spectral NLDR methods. By doing so, they iteratively refine the differential operator of a local, spectral NLDR method as coordinates are sequentially incorporated into an embedding. We crucially develop a key component that enables this expression as differential operators by estimating vector fields and components of the tangent space using the Tangent Derivation Estimator. We have exhibited specific algorithmic applications of this paradigm, but we note that the paradigm may apply much more widely.

\section*{Broader Impact}
We see this work's impact to be a step towards improved visualizations and understanding of high dimensional data and complex machine learning models.
Both of these areas can have significant positive societal impact, but we are unable to anticipate if there are any negative consequences. 
We also believe tangent space estimation may have impact beyond machine learning, such as in mathematics research.

\bibliographystyle{plainnat}
\bibliography{common/ling2}

\appendix

\section{Additional methods}

\subsection{Further refinements to vector field estimates}  
\label{sec:refinements}
These vector field estimators assume one has accurate approximations to the underlying coordinates. We implement two refinements to handle curvature and distortions of scale in the estimated coordinate functions. To correct for curvature, the vector $\hat{V}_{k, i, J}$ can be projected onto the column space of the neighbors of $Y_i$ centered on $Y_i$ in the ambient space, in other words, onto the columns space of $\tilde{Y}_J := Y_J - 1 Y_i$. This space contains the tangent space as long as the number of neighbors used is greater than the intrinsic dimension of the manifold.
To correct for distortions of scale, we choose a scaling of the rows such that $\|\hat{V}_{k, i, \cdot} (Y_J - 1 Y_i)\| = 1$ for all indices $i$.
In both cases, the corrections arise since a geodesic curve passing through $p$ on the manifold can be approximated with a linear function passing though $p$ in the ambient space and that the derivative of the geodesic curve is equal to 1. This gives the refinements
\begin{align}
    \tilde{V}_{k,i,J} &= \Pi_i \hat{V}_{k, i, J}^T, \qquad
    \tilde{\tilde{V}}_{k,i,J} = \tilde{V}_{k,i,J} / \| \tilde{V}_{k,i,J} Y_J \|,
\end{align}
where $\Pi_i = \tilde{Y}_J (\tilde{Y}_J^T \tilde{Y}_J)^{-1} Y_J^T$ is the projection onto the column span of $\tilde{Y}$.

\subsection{Choosing the regularization parameter}
\label{sec:regularization}
As the coordinates themselves do not contain information about the scale of the Laplacian $L$ or other base operator, we rescale the penalty $V_k^T V_k$ so that the Frobenius norms of $L$ and the penalty are the same. Given this scaling,
we find manifold deflation is relatively insensitive to the choice of regularization parameter, returning the correct structure for a noisy S-curve with hole for a range of regularization parameter spanning over 3 orders of magnitude from $\lambda \leq 1/2$ to $ \geq 500$. 
We generally found that values slightly larger than 1 worked well. We used a penalty of 3 for our synthetic datasets and 2 for our experiments on Fashion MNIST.

\section{Proofs}

\begin{lemma}[Orthogonality of normal coordinate functions]
\label{lem:orthogonal normal coord}
Let $\delta > 0$ be chosen so that 
for any  $p \in \mcal \subset \mathbb{R}^d$ 
and $ \ncal_\delta^p \defn B(p, \delta) \cap \mcal$,
there exists a neighborhood $\jcal_p$ and normal coordinates $x_1^p(q), \ldots, x_m^p(q)$ for points $q \in \jcal_p$ where the closure of the neighborhood $\overline{\ncal}_\delta^p \subset \jcal_p$.
Denote $M_2 = \E (Z_1^2)$ where $\mathbf{Z}$ is a uniform draw from a unit $m$-sphere and $Z_1$ is the first coordinate.
Then 
\begin{align}
    \sup_{p\in\mcal, \mathrm{dist}(p, \partial \mcal) < \delta} \E ( \delta^{-2} x_i^p(Y) x_j^p(Y) | Y \in \ncal_n^p)  &= \left\lbrace \begin{array}{ll} 
    O(\delta^2) & \mbox{if $i \neq j$} \\
    M_2 + O(\delta^2) & \mbox{otherwise}
    \end{array}
    \right. \\
    \sup_{p\in\mcal} \E ( \delta^{-1} x_i^p(Y) | Y \in \ncal_\delta^p) = O(\delta^2) \\
    \sup_{p\in\mcal, dist(p, \partial \mcal) > \delta} P(Y \in \ncal_\delta^p) &= \gamma M_1 \delta^m (1+ O(\delta^2)) \label{eqn:probability ball}
\end{align}
where $Y$ is a uniform random draw from the manifold and $\nu_m$ is the volume of a $m$-sphere in $\mathbb{R}^m$.
\end{lemma}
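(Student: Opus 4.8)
The plan is to pass to geodesic normal coordinates centered at each $p$, rescale the $\delta$-neighborhood to unit size, and recognize the conditional law of the rescaled coordinates as an $O(\delta^2)$-perturbation (uniform in $p$) of the uniform distribution on the unit ball $B_1\subset\R^m$; the three displayed estimates then fall out of the first and second moments of $B_1$ and of the mass of the rescaled neighborhood. Throughout, the natural hypothesis (matching that of the third estimate) is that the geodesic $\delta$-ball about $p$ lies in the interior of $\mcal$, so that $\ncal_\delta^p$ is an untruncated ball; for $p$ near $\partial\mcal$ the off-diagonal and first-moment bounds persist by the same argument in boundary-adapted normal coordinates, whose reflection symmetry in the directions tangent to $\partial\mcal$ is what survives.

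First I would fix the setup. By the hypothesis on $\delta$ together with compactness of $\mcal$ — which supplies uniform lower bounds on the injectivity radius and the reach of the embedding, and uniform bounds on the curvature tensor, its derivatives, and the second fundamental form $\mathrm{II}$ — each $p$ carries geodesic normal coordinates $x^p$ on $\jcal_p\supset\overline{\ncal}_\delta^p$ in which the Riemannian volume element is $\big(1-\tfrac{1}{6}\operatorname{Ric}_p(x,x)+O(|x|^3)\big)\,dx$, the error uniform in $p$. Since $Y$ is uniform on $\mcal$, the conditional law of $x^p(Y)$ given $Y\in\ncal_\delta^p$ has density $\propto\sqrt{\det g_p(x)}\,\mathbf 1_{R_{p,\delta}}(x)$, where $R_{p,\delta}:=(\exp_p)^{-1}(\ncal_\delta^p)$. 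Substituting $x=\delta u$, the law of $U:=x^p(Y)/\delta$ has density $\propto\sqrt{\det g_p(\delta u)}\,\mathbf 1_{\tilde R_{p,\delta}}(u)$ with $\tilde R_{p,\delta}:=\delta^{-1}R_{p,\delta}$, and $\sqrt{\det g_p(\delta u)}=1+O(\delta^2)$ uniformly on $\tilde R_{p,\delta}$.

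Next, the crux is the geometric estimate $\operatorname{vol}\big(\tilde R_{p,\delta}\,\triangle\, B_1\big)=O(\delta^2)$ uniformly in $p$. Here one must use the \emph{ambient} embedding, because $\ncal_\delta^p$ is a Euclidean $\delta$-ball rather than a geodesic ball: expanding $q\mapsto\|q-p\|_{\R^d}$ in normal coordinates and using that $\mcal$ is isometrically embedded with bounded second fundamental form gives $\|\exp_p(x)-p\|_{\R^d}^2=|x|^2\big(1+O(|x|^2)\big)$ uniformly, so $\ncal_\delta^p$ corresponds to $\{x:|x|\le\delta(1+O(\delta^2))\}$ and $\tilde R_{p,\delta}$ to a perturbation of $B_1$ by a spherical shell of width $O(\delta^2)$. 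Combined with the $O(\delta^2)$ perturbation of the density, this shows the law of $U$ is within $O(\delta^2)$ of $\operatorname{Unif}(B_1)$ in total variation, uniformly in $p$, so its first and second moments (against the bounded integrands $u_i$, $u_iu_j$) match those of $\operatorname{Unif}(B_1)$ up to $O(\delta^2)$.

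Finally I would read off the three claims. For the first, $\E\big(\delta^{-2}x^p_i(Y)x^p_j(Y)\mid Y\in\ncal_\delta^p\big)=\E(U_iU_j)=\E_{\operatorname{Unif}(B_1)}(U_iU_j)+O(\delta^2)$, which by the sign-flip and permutation symmetries of $B_1$ is $O(\delta^2)$ when $i\ne j$ and $M_2+O(\delta^2)$ when $i=j$, where $M_2=\E_{\operatorname{Unif}(B_1)}(U_1^2)$ is the constant written $\E(Z_1^2)$ in the statement. For the second, $\E\big(\delta^{-1}x^p_i(Y)\mid Y\in\ncal_\delta^p\big)=\E(U_i)=\E_{\operatorname{Unif}(B_1)}(U_i)+O(\delta^2)=O(\delta^2)$ by symmetry of $B_1$ under $u_i\mapsto-u_i$. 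For the third, $P(Y\in\ncal_\delta^p)=\operatorname{vol}(\mcal)^{-1}\int_{R_{p,\delta}}\sqrt{\det g_p}\,dx=\operatorname{vol}(\mcal)^{-1}\delta^m\int_{\tilde R_{p,\delta}}\sqrt{\det g_p(\delta u)}\,du=\operatorname{vol}(\mcal)^{-1}\delta^m\big(\nu_m+O(\delta^2)\big)$, which is $\gamma M_1\,\delta^m(1+O(\delta^2))$ once $\gamma M_1$ is identified with $\nu_m/\operatorname{vol}(\mcal)$. Each bound is uniform in $p$ with a common $O(\delta^2)$ envelope, so passing to $\sup_p$ is immediate. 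The main obstacle is the uniform estimate $\operatorname{vol}(\tilde R_{p,\delta}\triangle B_1)=O(\delta^2)$: it is the one step that must invoke the ambient embedding and a genuinely global (compactness) argument, whereas everything else is the standard normal-coordinate volume expansion plus symmetry bookkeeping.
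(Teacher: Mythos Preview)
Your proposal is correct and follows essentially the same route as the paper's proof. Both arguments rest on the same three ingredients: (i) the volume form expansion $\sqrt{\det g_p(x)}=1+O(|x|^2)$ in normal coordinates (the paper imports this as Lemmas~6--7 of \cite{coifman2006diffusion}, you write it as the Ricci expansion); (ii) the comparison $\|\exp_p(x)-p\|^2=|x|^2(1+O(|x|^2))$ showing that the ambient $\delta$-ball $\ncal_\delta^p$ differs from the geodesic $\delta$-ball only by a shell of width $O(\delta^3)$ and hence volume $O(\delta^{m+2})$; and (iii) the sign-flip symmetry of the ball to kill the odd integrands $x_i$ and $x_ix_j$ $(i\neq j)$. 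Your packaging via ``the law of $U=\delta^{-1}x^p(Y)$ is $O(\delta^2)$ from $\mathrm{Unif}(B_1)$ in total variation, hence all bounded moments match to $O(\delta^2)$'' is a slightly cleaner bookkeeping device than the paper's direct integral computation, but the underlying estimates and the uniformity-via-compactness step are identical.
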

\begin{proof}

Let $p \in \mcal$ and $q \in \ncal_\delta^p$. We wish to show that distances in the ambient space are sufficiently close to distances in normal coordinates, and that 
the natural measure on the manifold using the natural volume form can be converted to Lebesgue measure with respect to normal coordinates.
Let $u^p(q)$ denote the projection of $q$ onto the tangent plane.
Lemma 6 in \cite{coifman2006diffusion} relates normal coordinates to the projection into the tangent plane and states that,
$x_i^p = u_i^p + Q_{p,3}(u_i^p) + O(\delta^4)$ where $Q_{p,3}$ is a homogeneous polynomial of degree 3 with coefficients depending on the curvature of the manifold at $p$.
Lemma 7 in \cite{coifman2006diffusion} relates distances in the ambient space to the tangent space and states that  
\begin{align}
\|q-p\|^2 &= \|u^p(q)\|^2  + Q_{p,4}(u^p(q)) + Q_{p,5}(u^p(q)) + O(\delta^6) \\
\det\left(\frac{dq}{du^{p}}\right) &= 1 + Q_{p,2}(u^p(q)) + Q_{p,3}(u^p(q)) + O(\delta^4)
\end{align}
where $Q_{p,\rho}$ are homogeneous polynomials of degree $\rho$ that depend on the curvature of the manifold at $p$. The latter shows the natural volume form is well approximated when integrating using Lebesgue measure on tangent space coordinates.

Thus, we also have the following approximations in a neighborhood with a normal coordinate chart,
\begin{align}
\|q-p\|^2 &= \|x^p(q)\|^2  + Q_{p,4}(x^p(q)) + Q_{p,5}(x^p(q)) + O(\delta^6) \\
\det\left(\frac{dq}{x^{p}}\right) &= 1 + Q_{p,2}(x^p(q)) + Q_{p,3}(x^p(q)) + O(\delta^4). \label{eqn:volume form}
\end{align}
These allow us to approximate balls in the ambient space with balls in normal coordinates and to approximate an integral on the manifold with an integral in Euclidean space.

Let $q_0 = \argmax_{q \in \overline{\ncal_\delta^p}} \|\mathbf{x}(q)\|$ be the furthest point in normal coordinates in the closed neighborhood.
Since geodesic distances are at least as long as straight line distances in the ambient space, the ball of radius $\delta$ in normal coordinates must be contained in the neighborhood $\jcal_p$ and $\|\mathbf{x}(q_0)\| \geq \delta$. Furthermore, 
since the furthest point in the tangent space that can be in the neighborhood  trivially has distance $\delta$ from $p$, one has $\|\mathbf{x}(q_0)\| = \delta + O(\delta^3)$.
Thus, the neighborhood $\ncal_\delta^p$ can be approximated by a ball $B_x(p, \delta)$ of radius $\delta$ in normal coordinates. The difference in measure between the two balls is
$\mu(\ncal_\delta^p \backslash B_x(p, \delta)) \leq \gamma (s_m \delta^{m-1}) \eta \delta^{3}$
where $\eta >0$ is some constant, $s_m \delta^{m-1}$ is the surface area of a unit sphere in $m$ dimensions, and $\gamma$ is the uniform density on the manifold. Denote 
$\mcal^0 =\{ p \in \mcal : dist(p, \partial \mcal) > \delta \}$.
Then,
\begin{align}
    \sup_{p\in\mcal^0} &P(Y \in \ncal_\delta^p) \\
    &= \int_{B_x(p,\delta)} \gamma dx^p_1 \cdots dx^p_m (1+O(\delta^2)) + \gamma s_m \eta \delta^{m-1} O(\delta^3) \nonumber \\  
    &= \gamma \nu_m \delta^m(1+ O(\delta^2)) + \gamma s_m \eta \delta^{m-1} O(\delta^3)\nonumber \\
    &= \gamma \nu_m \delta^m(1+ O(\delta^2))
\end{align}
where $\nu_m$ is the volume of a unit ball in $m$-dimensions, and equation \ref{eqn:volume form} is used to convert an expectation using normal coordinates to an expectation with respect to the measure on the manifold.

Likewise, 
\begin{align}
    & \sup_{p\in \mcal^0} \E ( \delta^{-2} x_i^p(Y) x_j^p(Y) 1(Y \in \ncal_\delta^p)) \\ 
    &\quad= 
    \sup_{p\in \mcal^0} \E ( \delta^{-2} x_i^p(Y) x_j^p(Y) 1(Y \in B_x(p, \delta))) + 
    \gamma (1+O(\delta^3)) \gamma s_m O(\delta^{m-1}) \eta \delta^3
    \nonumber  \\
    &\quad= 
    \sup_{p\in \mcal^0} \E ( \delta^{-2} x_i^p(Y) x_j^p(Y) | Y \in B_x(p, \delta)) p(Y \in B_x(p, \delta)) + 
    O(\delta^{m+2}) \nonumber \\
    &\quad= 
    \sup_{p\in \mcal^0} \E ( \delta^{-2} x_i^p(Y) x_j^p(Y) | Y \in B_x(p, \delta)) \gamma \nu_m \delta^m(1+ O(\delta^2)) +
    O(\delta^{m+2}) \nonumber \\
    &\quad= 
    \sup_{p\in \mcal^0} \E ( \delta^{-2} x_i^p(Y) x_j^p(Y) | Y \in B_x(p, \delta)) \gamma \nu_m \delta^m +
    O(\delta^{m+2}) \nonumber \\
    \nonumber \\
    & \sup_{p\in \mcal^0} \E ( \delta^{-1} x_i^p(Y) 1(Y \in \ncal_\delta^p)) \\ 
    &\quad= 
    \sup_{p\in \mcal^0} \E ( \delta^{-1} x_i^p(Y) 1(Y \in B_x(p, \delta)))(1 + O(\delta^2)) + \gamma (1+O(\delta^3)) S_m O(\delta^{m-1}) \eta \delta^3 \nonumber  \\
    &\quad= 
    \sup_{p\in \mcal^0} \E ( \delta^{-1} x_i^p(Y) | Y \in B_x(p, \delta)) \gamma \nu_m \delta^m + O(\delta^{m+2}). \nonumber
\end{align}
The first moment and second moment when $i \neq j$ are $0 +O(\delta^{m+2})$ since the $x_i$ and $x_i x_j$ are odd functions. Likewise, the second moment when $i = j$ is $c + O(\delta^{m+2})$ for some constant $c$ that does not depend on $i$ since $x_i^2$ is even.

\end{proof}

\begin{theorem}[Consistency of vector field estimate]
    Let $\ycal^{(n)}$ be a uniform sample from a smooth $m$-dimensional manifold $\mcal \subset \mathbb{R}^d$ and let the sequence
    $(\epsilon_n)$ decrease to zero. 
    Let  $\phi : \mcal \to \mathbb{R}$ be a smooth function such that $\inf_{p\in \mcal} \|\nabla \phi(p)\| > 0$. Given a smooth function $f$ with bounded gradient.
    Let $\hat{D}_{\phi,\epsilon_n}$ be the estimate of $\partial f / \partial \phi$ given by Eq.~\ref{eqn:vf approx} on an $\epsilon_n$-neighborhood graph.
    Then, for some constant $\alpha$ and for any $\delta > 0$, if $\epsilon_n \to 0$ sufficiently slowly as $n \to \infty$,
    \begin{align}
        \sup_{\substack{p \in \mcal \\ d(p, \partial \mcal) > \delta}} \left|\frac{\partial f}{\partial \phi}(p) - \alpha \hat{D}_{\phi,\epsilon_n} (p)\right| \convp 0.
    \end{align}
\end{theorem}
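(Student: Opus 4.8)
The plan is to recognize $\hat D_{\phi,\epsilon_n}(p)$ as the slope of a simple least-squares regression of the neighbor values of $f$ on those of $\phi$, compute its population limit via a Taylor expansion in normal coordinates together with the symmetry estimates of Lemma~\ref{lem:orthogonal normal coord}, and then upgrade pointwise consistency to uniform consistency over interior points by a covering/concentration argument; the constant turns out to be $\alpha=1$ for the estimator exactly as written in Eq.~\ref{eqn:vf approx}. Concretely, write $\acal=\acal(p)$ for the sample indices lying in the ambient ball $\ncal^p_{\epsilon_n}=B(p,\epsilon_n)\cap\mcal$. From Eq.~\ref{eqn:vf approx}, $\hat D_{\phi,\epsilon_n}(p)=\sum_{q\in\acal}(\phi(q)-\overline\phi_\acal)\,f(q)\big/\sum_{q\in\acal}(\phi(q)-\overline\phi_\acal)^2$, and since $\sum_{q\in\acal}(\phi(q)-\overline\phi_\acal)=0$ this equals $\widehat{\mathrm{Cov}}_\acal(\phi,f)\big/\widehat{\mathrm{Var}}_\acal(\phi)$, the ratio of the empirical covariance and variance over the neighbor set.

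Next I would identify the population limit. Fix $p$ with $d(p,\partial\mcal)>\delta$ and take $n$ large enough that $\epsilon_n<\delta$, so $\ncal^p_{\epsilon_n}$ is a full interior ball and Lemma~\ref{lem:orthogonal normal coord} applies (its radius ``$\delta$'' being our $\epsilon_n$). In normal coordinates $x=x^p$ at $p$, smoothness of $f,\phi$ and compactness of $\mcal$ give, uniformly in $p$, $f(q)=f(p)+\nabla f(p)^T x(q)+O(\epsilon_n^2)$ and $\phi(q)=\phi(p)+\nabla\phi(p)^T x(q)+O(\epsilon_n^2)$. Replacing empirical moments by conditional expectations over $Y\in\ncal^p_{\epsilon_n}$ and invoking the lemma's estimates $\E(x_i\mid\ncal^p_{\epsilon_n})=O(\epsilon_n^3)$ and $\E(x_ix_j\mid\ncal^p_{\epsilon_n})=M_2\epsilon_n^2\,\indicator(i=j)+O(\epsilon_n^4)$ — whence $\mathrm{Cov}(x\mid\ncal^p_{\epsilon_n})=M_2\epsilon_n^2 I+O(\epsilon_n^4)$ — yields $\mathrm{Cov}(\phi,f\mid\ncal^p_{\epsilon_n})=M_2\epsilon_n^2\,\nabla\phi(p)^T\nabla f(p)+O(\epsilon_n^3)$ and $\mathrm{Var}(\phi\mid\ncal^p_{\epsilon_n})=M_2\epsilon_n^2\,\|\nabla\phi(p)\|^2+O(\epsilon_n^3)$, the $O(\epsilon_n^2)$ Taylor remainders only affecting the $O(\epsilon_n^3)$ terms. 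Since $\inf_p\|\nabla\phi(p)\|>0$ the denominator is bounded below by a positive multiple of $\epsilon_n^2$, so the population ratio is $\nabla\phi(p)^T\nabla f(p)/\|\nabla\phi(p)\|^2+O(\epsilon_n)$, uniformly over interior $p$. By definition $\partial f/\partial\phi(p)=\|\nabla\phi(p)\|^{-2}\,\nabla\phi(p)^T\nabla f(p)$ (the directional derivative of $f$ along $\nabla\phi$, rescaled), which identifies the limit; note that the regression on $\phi$ alone automatically picks out exactly this component, which is why the remaining unknown directions can be ignored.

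The crux is the passage from population to empirical quantities, uniformly over $\{p:d(p,\partial\mcal)>\delta\}$. By Eq.~\ref{eqn:probability ball} the neighbor count satisfies $|\acal(p)|\asymp n\gamma\nu_m\epsilon_n^m$; the summands $(\phi(q)-\overline\phi_\acal)(f(q)-\overline f_\acal)$ and $(\phi(q)-\overline\phi_\acal)^2$ are $O(\epsilon_n^2)$, so Bernstein's inequality gives $|\widehat{\mathrm{Cov}}_\acal-\mathrm{Cov}|+|\widehat{\mathrm{Var}}_\acal-\mathrm{Var}|=O_p\big(\epsilon_n^2(n\epsilon_n^m)^{-1/2}\big)$ at a fixed $p$. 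To make this simultaneous over the interior set, cover it by $O(\epsilon_n^{-m})$ balls (or use that $\{B(p,\epsilon_n)\}$ is a VC class), take a union bound, and control the discretization error by Lipschitz continuity in $p$ of the conditional moments; this costs a $\log(1/\epsilon_n)$ factor. Dividing the moment errors by the $\Theta(\epsilon_n^2)$ scale of the denominator, and noting the same concentration keeps $\widehat{\mathrm{Var}}_\acal(\phi)$ above a positive multiple of $\epsilon_n^2$ with probability $\to1$ uniformly, the stochastic error in $\hat D_{\phi,\epsilon_n}(p)$ is $O_p\big((n\epsilon_n^m)^{-1/2}\log(1/\epsilon_n)\big)$ uniformly. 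Combining with the $O(\epsilon_n)$ bias, $\sup_{d(p,\partial\mcal)>\delta}|\partial f/\partial\phi(p)-\alpha\hat D_{\phi,\epsilon_n}(p)|=O_p(\epsilon_n)+O_p\big((n\epsilon_n^m)^{-1/2}\log(1/\epsilon_n)\big)$ with $\alpha=1$, which tends to $0$ in probability whenever $\epsilon_n\to0$ and $n\epsilon_n^m/\log(1/\epsilon_n)\to\infty$ — precisely the ``$\epsilon_n\to0$ sufficiently slowly'' hypothesis.

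I expect the main obstacle to be the uniform concentration step: once the $1/\epsilon_n^2$ rescaling inherent in the regression ratio is undone, the stochastic fluctuation behaves like $(n\epsilon_n^m)^{-1/2}$, and one must control it simultaneously over a continuum of interior points while guaranteeing that the random regression denominator $\widehat{\mathrm{Var}}_\acal(\phi)$ does not degenerate; this is also the step that pins down how slowly $\epsilon_n$ may decay. The geometric inputs (Taylor expansion in normal coordinates, conversion between ambient balls and normal-coordinate balls, odd/even symmetry of coordinate monomials) are all supplied by Lemma~\ref{lem:orthogonal normal coord} and are comparatively routine.
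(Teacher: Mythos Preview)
Your proposal is correct. The bias analysis---recognizing the estimator as the simple-regression slope, Taylor-expanding $f$ and $\phi$ in normal coordinates, and invoking Lemma~\ref{lem:orthogonal normal coord} to identify the population ratio with $\langle\nabla f,\nabla\phi\rangle/\|\nabla\phi\|^2=\partial f/\partial\phi$ up to $O(\epsilon_n)$---is essentially the same as the paper's first step (the paper centers at $\phi(p)$ rather than the empirical mean $\overline\phi_\acal$, but the lemma's first-moment bound makes the difference negligible).

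The genuine difference is in the stochastic step. The paper does not use Bernstein plus a covering/union-bound argument; instead it shows that the families $\{f(\cdot)\psi_{p,\epsilon}(\cdot)\}_{p,\epsilon}$ and $\{\psi_{p,\epsilon}(\cdot)^2\}_{p,\epsilon}$ (with $\psi_{p,\epsilon}(q)=(\phi(q)-\phi(p))1(\|q-p\|<\epsilon)$) are VC classes, hence BUEI and $P$-Glivenko--Cantelli, via the preservation rules of empirical process theory. Uniform convergence of the empirical numerator and denominator then follows for each fixed $\epsilon$, and a diagonal-sequence argument lets $\epsilon_n\downarrow0$ ``sufficiently slowly.'' Your route is more elementary and has the advantage of yielding an explicit rate and an explicit meaning for ``sufficiently slowly'' (namely $n\epsilon_n^m/\log(1/\epsilon_n)\to\infty$); the paper's empirical-process argument is slicker and avoids discretization bookkeeping, but is nonquantitative. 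Both are valid paths to the same conclusion.
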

\begin{proof}
We prove the theorem in two-steps. First, we show a non-stochastic Tangent Derivation Estimator with infinite samples converges to the desired differential operator. Second, we show stochastic convergence to this infinite sample limit.

For clarity of exposition in the non-stochastic portion of the proof, we prove the case for a fixed $p$ and show that it can be easily modified to show joint convergence over all $p$ sufficiently far from the boundary.
Let $p \in \mcal$. Denote $\psi_{p, \epsilon}(q) = (\phi(q) - \phi(p)) 1(\|p - q\| < \epsilon)$
where the norm is the Euclidean norm in $\mathbb{R}^d$.
We wish to find the asymptotic limit of the Tangent Derivation Estimator at $p$. We denote the Tangent Derivation Estimator as 
$D_{\phi, \epsilon_n} = \langle f, \psi_{p, \epsilon_n} \rangle / \| \psi_{p,\epsilon_n}\|^2$.
Consider a Taylor expansion of $f$ and $\psi_{p, \epsilon_n}$ in normal coordinates in the neighborhood $B(p, \epsilon_n) \cap \mcal$ where the ball is in the ambient space $\mathbb{R}^d$. 
\begin{align}
\label{eqn:taylor f}
    f(q) &= f(p) + \nabla f(p) \mathbf{x}(q) + O(\epsilon_n^2) \\
\label{eqn:taylor psi}
\psi_{p,\epsilon_n}(q) &= \nabla \phi(p) \mathbf{x}(q) + O(\epsilon_n^2)
\end{align}
By lemma \ref{lem:orthogonal normal coord},
\begin{align}
    \langle f, \psi_{p,\epsilon_n} \rangle &\defn \E f\psi_{p,\epsilon_n}= 
     \gamma \nu_m \epsilon_n^m (\langle \nabla f(p), \nabla \phi(p)\rangle + O(\epsilon_n^2)) \\
    \|\psi_{p,\epsilon_n} \|^2 &= \gamma \nu_m \epsilon_n^m ( \|\nabla \phi(p)\|^2 + O(\epsilon_n^2))
\end{align}
where $\gamma >0$ does not depend on $\epsilon_n$ or $p$.
Taking $\alpha = 1/\gamma \nu_m$ gives
\begin{align}
\frac{\partial f}{\partial \phi}(p) = \alpha D_{\phi,\epsilon_n} + O(\epsilon_n^2).
\end{align}

To obtain the desired convergence result under all $p$ sufficiently far from the boundary, note that equations \ref{eqn:taylor f} and \ref{eqn:taylor psi} for fixed $p$ have the 
counterparts
\begin{align}
    \sup_{p \in \mcal, \|q-p\| < \epsilon_n} \|f(q) - f(p) - \nabla f(p) \mathbf{x}(q)\| &= O(\epsilon_n^2) \\
    \sup_{p \in \mcal, d(p, \partial \mcal) > \epsilon_n, \|q-p\| < \epsilon_n} \|\psi_{p,\epsilon_n}(q) - \nabla \phi(p) \mathbf{x}(q)\| &= O(\epsilon_n^2)
\end{align}
since the functions $f, \phi$ are smooth and the manifold is compact. 
Following the same steps as before yields the desired convergence result.

This proves convergence of the expectation of the estimate to the true partial derivatives. We now prove convergence in probability of the stochastic processes. We rely primarily on the VC class and BUEI preservation rules given in \cite{kosorok2008empiricalprocesses}.
First, we note that all the expectations used in this proof and of lemma \ref{lem:orthogonal normal coord} are with respect to a continuous uniform measure on the manifold. Thus, any expectation over a closed ball is the same as that over the corresponding open balls. The set of all closed balls in $\mathbb{R}^d$ has finite VC-dimension. The functions $\psi_{p, \epsilon_n}$ are a product of a fixed function and an open ball. The product of a fixed function with VC-class is still a VC-class. Thus, $\{f(\cdot) \psi_{p, \epsilon}(\cdot) \}_{p, \epsilon}$ forms a VC-class. 
Likewise, since $\psi_{p,\epsilon}$ can be split into positive and negative parts
and $x \to x^2$ is monotone on either positive reals or on negative reals,
the class $\{ \psi_{p, \epsilon}(\cdot)^{2} \}_{p, \epsilon}$ has finite VC-dimension.
Since $\mcal$ is compact and $f, \phi$ are smooth then the VC-classes are also bounded and thus have an integrable envelope.
 VC-classes have bounded uniform entropy integral (BUEI) if they have an integrable envelope.
 Furthermore, the class of constant functions $\{1/\epsilon^{2m}\}_{\epsilon \geq r}$
 is trivially BUEI if $\epsilon$ is restricted so that $\epsilon \geq r$ for some $r > 0$.
 Since products of BUEI classes are still BUEI,
 and BUEI classes are P-Glivenko-Cantelli, it follows that 
 $\{f(\cdot) \psi_{p, \epsilon}(\cdot) / \epsilon^{2m} \}_{p, \epsilon \geq r}$ and $\{\psi_{p, \epsilon}^2(\cdot) / \epsilon^{2m} \}_{p, \epsilon \geq r}$
 are P-Glivenko-Cantelli.

Let $\mathbb{P}_n$ denote the empirical distribution of $n$ samples, and write
$\hat{D}^n_{\phi, \epsilon}(p) = \mathbb{P}_n f \psi_{p, \epsilon} / \mathbb{P}_n \psi_{p,\epsilon}^2$. Since the numerator and denominator are both P-Glivenko-Cantelli and the denominator is bounded away from 0 almost surely, by the continuous mapping theorem,
$\sup_{p \in \mcal} |\hat{D}^n_{\phi,\epsilon}(p) - D_{\phi,\epsilon}(p)| \to 0$ almost surely for any $\epsilon > 0$ as $n \to \infty$.

We weaken this to convergence in probability as we take $\epsilon \to 0$.
Since almost sure convergence implies convergence in probability, for any sequence $\tau_t \downarrow 0$, we can find a corresponding sequences $n_t \to \infty$ and $\epsilon_t \downarrow 0$ so that
\begin{align}
    \sup_{p\in \mcal} P(|\hat{D}^n_{\phi, \epsilon_t}(p) - D_{\phi, \epsilon_t}(p)| > \tau_t) < 2^{-t} \quad \forall n > n_t.
\end{align}
Since when $dist(p, \partial \mcal) > h$, $\alpha D_{\phi, \epsilon_t}(p) \to \partial f(p)/ \partial \phi$ as $\epsilon_t \downarrow 0$, it follows that 
    \begin{align}
        \sup_{\substack{p \in \mcal \\ d(p, \partial \mcal) > h}} \left|\frac{\partial f}{\partial \phi}(p) - \alpha \hat{D}_{\phi,\epsilon_n} (p)\right| \convp 0.
    \end{align}

.

\end{proof}

 \subsection{Impossibility of estimating the tangent space given noise}
 Here we show that that when there is off manifold noise, then in a small neighborhood of any point on the manifold, the points in a local neighborhood are asymptotically equivalent to a uniform draw from a ball in the high-dimensional space $\mathbb{R}^d$. Thus, there is no information about the tangent space just from that neighborhood.
 
\begin{lemma}
\label{lem:uniform ball}
    Let $h_n$ be a decreasing sequence converging to 0.
    Let $Y_n = Z + h_n \epsilon$ where $Z$ is a uniform random sample from a manifold $\mcal \subset \mathbb{R}^d$ and $\epsilon$ is independent noise drawn from some distribution $F$ on $\mathbb{R}^d$ with a smooth density $f$. Further assume $f(0) > 0$ and has bounded support.
    For any $p \in Int(\mcal)$ and sequence $\delta_n$ such that $\delta_n / h_n \to 0$, the conditional densities $g_n$ for $X_n = \delta_n^{-1}(Y_n - p)\,  | \,\|Y_n - p\| < \delta_n$ converge with $g_n(x) \to c$ on the unit ball for some constant $c$.  
\end{lemma}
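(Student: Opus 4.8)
The plan is to show that on the scale $\delta_n$, the manifold looks flat and the noise density looks constant, so the conditional law of $X_n$ converges to the uniform law on a ball in $\mathbb{R}^d$. Concretely, write the unconditional density of $Y_n$ at a point $y$ near $p$ as a convolution
\[
p_{Y_n}(y) = \gamma \int_{\mcal} h_n^{-d}\, f\!\left(\frac{y - z}{h_n}\right) d\mathrm{vol}(z),
\]
where $\gamma$ is the uniform density on $\mcal$. The first step is to change variables in this integral to normal coordinates on $\mcal$ centered near $p$ (using the volume-form expansion from Lemma~\ref{lem:orthogonal normal coord}, i.e.\ Lemma~6 and~7 of \cite{coifman2006diffusion}), so that integration over the $m$-dimensional manifold becomes integration over an $m$-dimensional slice of $\mathbb{R}^d$ through $p$, up to curvature corrections of order $h_n$. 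Since $f$ has bounded support, only $z$ with $\|y - z\| = O(h_n)$ contribute, and for $y$ with $\|y - p\| < \delta_n = o(h_n)$ the relevant $z$ lie within $O(h_n)$ of $p$.

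Second, I would Taylor-expand: inside the integral, $f$ is evaluated at arguments that vary by $O(h_n + \delta_n/h_n \cdot \text{something})$, and since $f$ is smooth with $f(0) > 0$, one gets $p_{Y_n}(y) = \gamma h_n^{-d}\big(c_0(y) + O(h_n)\big)$ where $c_0(y) = \int_{T_p\mcal} f\big((y-p-u)/h_n\big)\,du$ — wait, more carefully, after rescaling the tangent integration variable by $h_n$ this becomes an integral of $f$ over an affine $m$-plane, depending on $y$ only through the component of $(y-p)/h_n$ orthogonal to $T_p\mcal$. Since $\|y - p\|/h_n \to 0$ on the region of interest, this orthogonal component goes to $0$, so $c_0(y) \to c_0(0) =: c' > 0$ uniformly for $\|y-p\| < \delta_n$. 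Hence $p_{Y_n}(y) = \gamma h_n^{-d} c' (1 + o(1))$ uniformly on $B(p, \delta_n)$.

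Third, form the conditional density of $X_n = \delta_n^{-1}(Y_n - p)$ given $\|Y_n - p\| < \delta_n$: for $x$ in the unit ball, $g_n(x) = \delta_n^d\, p_{Y_n}(p + \delta_n x) / P(\|Y_n - p\| < \delta_n)$. The numerator is $\delta_n^d \gamma h_n^{-d} c'(1+o(1))$, uniformly in $x$ on the unit ball; the normalizing denominator is just the integral of that over the unit ball, namely $\nu_d \delta_n^d \gamma h_n^{-d} c'(1+o(1))$ where $\nu_d$ is the volume of the unit ball. Taking the ratio, $g_n(x) \to 1/\nu_d =: c$ on the unit ball, which is the claim.

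The main obstacle is the uniformity of the Taylor/convolution estimate: I need the convergence $c_0(y) \to c'$ and the error bounds to hold uniformly over $\|y - p\| < \delta_n$ while $\delta_n/h_n \to 0$ but $h_n \to 0$ as well, so the curvature corrections (order $h_n$, from the volume form and from the deviation of $\mcal$ from its tangent plane over a ball of radius $O(h_n)$) must be controlled simultaneously with the argument-shift in $f$. Using that $p \in \mathrm{Int}(\mcal)$ ensures that for $n$ large the whole $O(h_n)$-neighborhood of $p$ relevant to the convolution stays in the interior, so no boundary truncation of $\mcal$ enters; compactness/smoothness of $\mcal$ and boundedness of the support and derivatives of $f$ then make all the $O(\cdot)$ terms genuinely uniform. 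Everything else — the change of variables, the Taylor expansions, and the ratio computation — is routine once that uniform control is in hand.
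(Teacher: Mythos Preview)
Your proposal is correct and follows the same outline as the paper's argument: write the density of $Y_n$ near $p$ as a convolution of the uniform measure on $\mcal$ with the rescaled noise density, argue this density is asymptotically constant on $B(p,\delta_n)$, and then normalize to get the conditional density of $X_n$ converging to $1/\nu_d$ on the unit ball.

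The technical execution differs in where the smallness $\delta_n/h_n \to 0$ is spent. The paper restricts the manifold integration to a $\delta$-neighborhood $\ncal_\delta^p$ of $p$ and Taylor-expands $f$ about $0$ there, with remainder $O(\delta^2/h_n^2)$; this is only legitimate if $\delta = \delta_n$, but then the integration domain omits the bulk of the points $z \in \mcal$ (those at distance $O(h_n) \gg \delta_n$ from $p$) that actually contribute to the convolution. You instead integrate over the full $O(h_n)$ patch dictated by the bounded support of $f$, change variables to the tangent plane at $p$ (absorbing curvature as an $O(h_n)$ correction), and use $\|y-p\|/h_n \to 0$ to show the resulting $m$-dimensional integral of $f$ over an affine plane stabilizes to a positive constant. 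Your route is the more careful one and avoids the paper's sloppy handling of the integration domain, while arriving at the same conclusion; nothing in your plan is missing, and the uniformity concern you flag is exactly the right thing to check and is handled as you indicate by compactness of $\mcal$, $p\in\mathrm{Int}(\mcal)$, and smoothness/bounded support of $f$.
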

\begin{proof}
Let $\ncal_{\delta}^p = B(p, \delta) \cap \mcal$.
Denote by $\mu$ the uniform measure on the manifold.
The density of $Y_n - p$ is given by 
\begin{align}
    \rho(y) &= \int_{\ncal_{\delta}^p} f( (x-p)/h_n) d\mu(x) \\
    &= \int_{\ncal_{\delta}^p} f(0) + \nabla f(0) (x-p)/h_n  + O(\delta^2/h_n^2) d\mu(x) \\
\end{align}

The first component can be computed by
\begin{align}
\int_{\ncal_{\delta}^p} f(0) d \mu(x) &= f(0) \nu_m \delta^m (1 + O(\delta^2)).
\end{align}

For the second, w.l.o.g. we can assume the coordinates are parameterized so that
the first $m$ coordinates $x_1, \ldots, x_m$ parameterize the tangent space at $p$.
The remaining coordinates can thus be written as $x_{m+i} = q(x_1, \ldots, x_m) + O(\delta^3)$ where
$q$ is a homogeneous polynomial of degree 2. This gives 
\begin{align}
h_n^{-1} \int_{\ncal_{\delta}^p} (x_i-p_i) d \mu(x) &= 
\nu_m \delta^m (0 + O(\delta^3)) (1 +  O(\delta^2)) 
\end{align}
where the $0$ term comes from evaluating the integral on the first $m$ coordinates and the $O(\delta^3)$ term comes from evaluating the integral on degree 2 polynomials on the remaining coordinates.

Dividing by the probability $\gamma \nu_m \delta^m (1+O(\delta^2))$ of being in the neighborhood that is given in equation \ref{eqn:probability ball} yields the desired result.
\end{proof}

\section{Experiments}

\subsection{Details}
For our experiments we considered both synthetic datasets and a real world dataset.
The synthetic datasets consisted of variations on an S-curve and a sphere.
The S-curve was generated by taking random samples from a $3 x 1$ rectangle. The rectangle is isometrically embedded into $\mathbb{R}^3$ by mapping each $1.5 \times 1$ half of the rectangle to half-cylinders and ensuring the mapping is continuous.
By taking the length of the original rectangle to be more than $2 \times$ the width, we ensure that the repeated eigendirections problem is encountered.
We modified the S-curve in two ways to generate two variations of the manifold. We cut out a rectangular hole, and we added a small amount of additive noise. The noise is i.i.d. from a cube with half-width 0.1, so that it is a small fraction of the length and width of the underlying rectangle. 
For the sphere we deterministically construct points using 
For each synthetic datasets we generated 3000 points each. For the S-curve with hole, this number was reduced by roughly 12\% due to cutting out a hole.

We used Fashion MNIST (FMNIST) for the real world dataset. We ran our experiments on 9000 randomly selected points from the 60000 training examples. Each example consists of a $28 \times 28$ pixel greyscale image converted into a vector in $744$ dimensions.

Experiments were run on a Macbook with a 2.2 GHz Intel i7 processor with 6 cores and 16GB of memory. On the FMNIST data, Manifold Deflation took 10 minutes to run to generate 2 coordinates. In comparison, Laplacian Eigenmaps took 3 minutes, and Non-redundant Laplacian Eigenmaps took 25 minutes. For the synthetic data sets with 3000 points and 3 dimensions, all methods combined finished in under 2 mins.

\subsection{Parameters}
The only manifold deflation specific parameters are the regularization parameter and a scaling of the vector field. The regularization parameter was chosen to be 3 for synthetic datasets and 2 for FMNIST as described in section \ref{sec:regularization}. For synthetic datasets, we used the refinements in section \ref{sec:refinements}. For the FMNIST dataset, we simply normalized each row of the estimated matrix $\hat{V_k}$ to have norm $1$.

For all graph constructions, we use k-nearest neighbors with 15 nearest neighbors.
Given a kNN graph, the edges are weighted with a Gaussian kernel with bandwidth $\sigma$. The bandwidth $\sigma$ is taken to be $5 \times $ the average distance from a point to its neighbor. For LTSA, we set the intrinsic dimension to be 2 for the synthetic datasets. We find that if it is set to the ambient dimension 3, then LTSA does not perform any meaningful dimensionality reduction.

\subsection{Code}
Our code is based off of the Megaman package for manifold learning using local, spectral methods \cite{mcqueen2016megaman}. We implemented Manifold Deflation, Tangent Derivation Estimation, and Vector Field Inversion as well as Non-Redundant Laplacian Eigenmaps for comparison. Rather than implement other eigenvector skipping procedures that also address the repeated eigendirections problem, we simply plot additional eigenvectors to show that they all yield bad embeddings.

In all cases, we used dense matrix computations and exact nearest neighborhood computations to avoid distortions coming from other sources. We did not find any qualitative differences when modestly increasing or decreasing the data set sizes. We found in the high dimensional FMNIST example, approximate nearest neighbor computations in megaman generated distortions in Manifold Deflation that were not visible when just using Laplacian Eigenmaps (LE) since LE collapsed the manifold structure. We did not further pursue the issue of scale in this implementation as it is tangential to the main ideas of this paper.

\section{Additional experimental results}

\subsection{Repeated eigendirections}
Figures \ref{fig:all pairs}, \ref{fig:box3d}, \ref{fig:all pairs box3d} illustrate cases where eigenvector skipping methods can never return a good embedding. 
In figures \ref{fig:box3d}, \ref{fig:all pairs box3d}, while Non-redundant Laplacian Eigenmaps returns a reasonable embedding, it distorts the manifold and makes one end larger than the other. 

\begin{figure}[H]
    \hspace{-2cm}
    \includegraphics[width=6.2in]{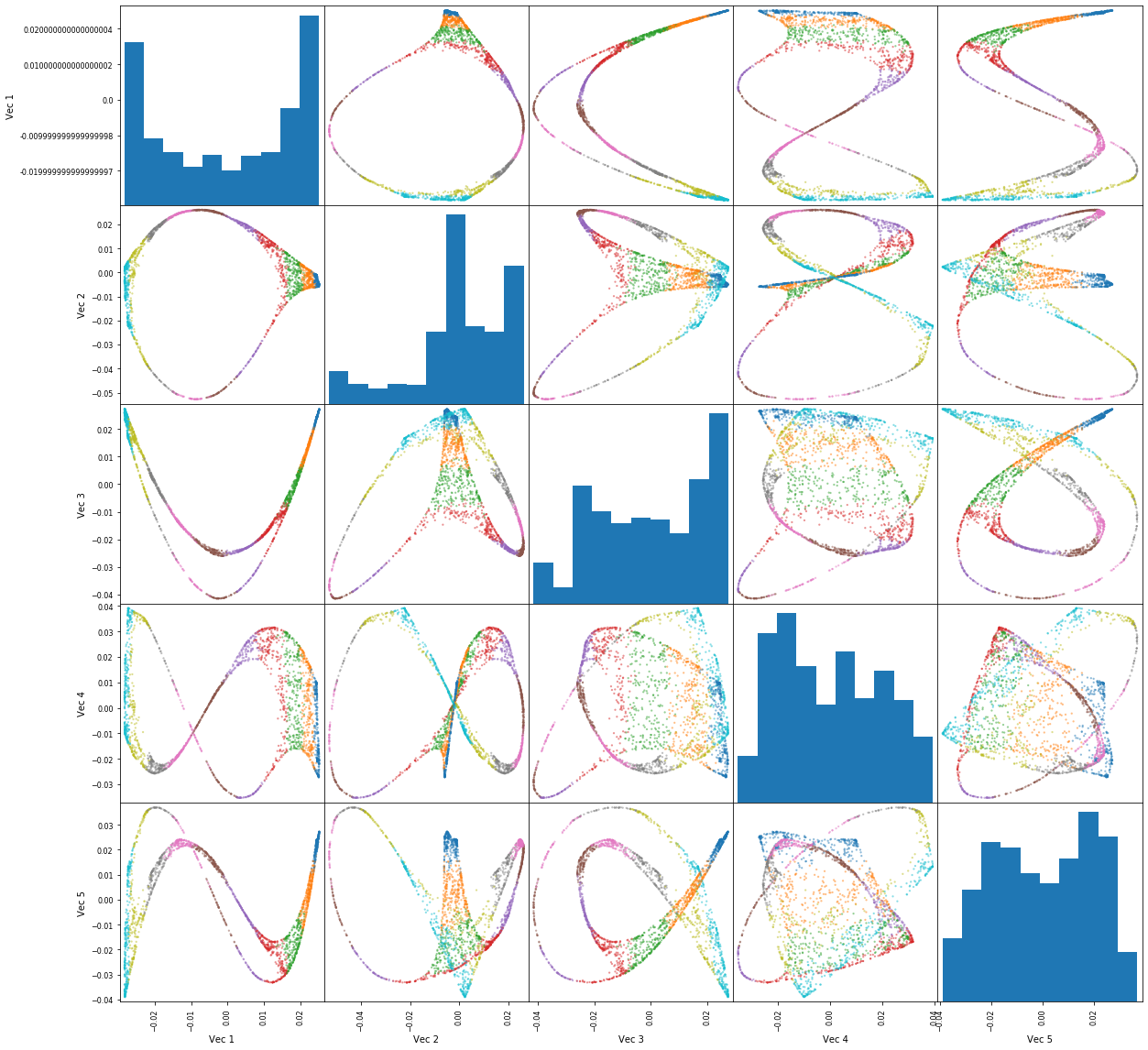}
    \caption{Pair plot of first five nonconstant eigenvectors from Laplacian Eigenmaps when run on an S-curve with hole and noise. No pair of eigenvectors yields a good embedding.}
    \label{fig:all pairs}
\end{figure}

\begin{figure}[H]
    \hspace{-2cm}
    \includegraphics[width=6.2in]{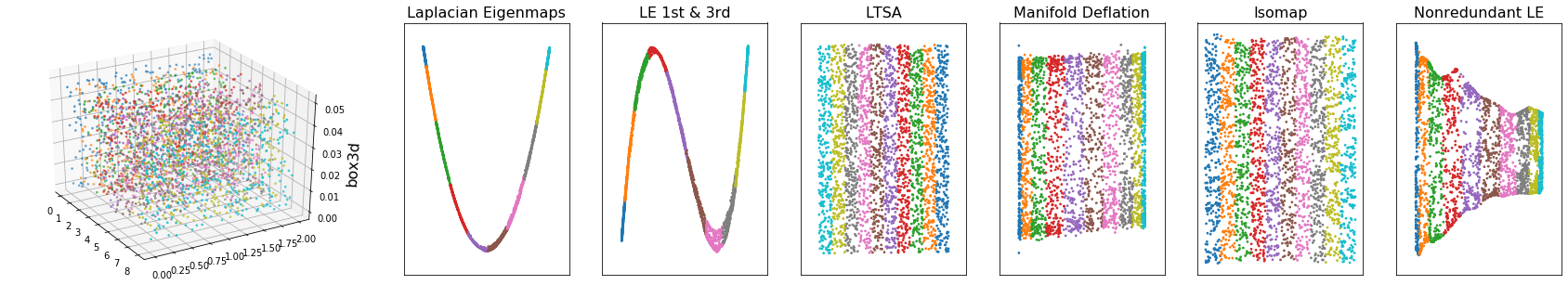}
    \caption{Embeddings for a $10 \times 2 \times 0.1$ box. Local, spectral methods display the repeated eigendirections problem.}
    \label{fig:box3d}
\end{figure}

\begin{figure}[H]
    \hspace{-2cm}
    \includegraphics[width=6.2in]{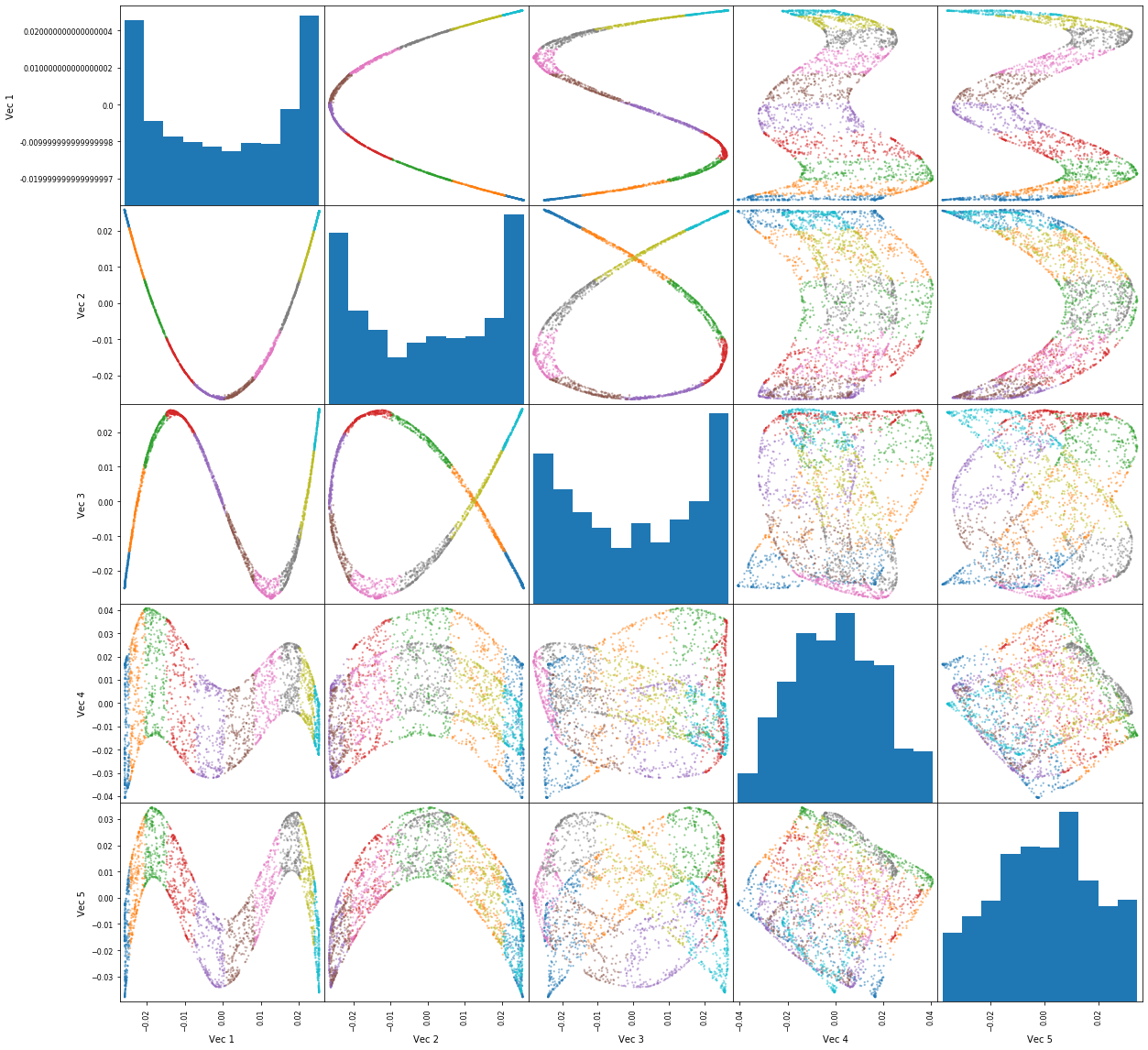}
    \caption{Pair plot of first five nonconstant eigenvectors from Laplacian Eigenmaps when run on a $10 \times 2 \times 0.1$ box. No pair of eigenvectors yields a good embedding.}
    \label{fig:all pairs box3d}
\end{figure}

\subsection{Vector field inversion}
\begin{figure}[H]
    \centering
    \includegraphics[width=3in]{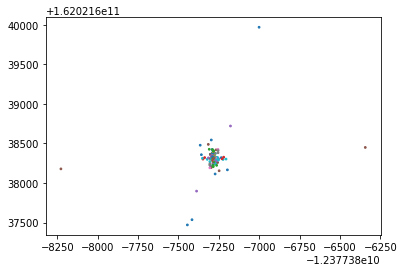}
    \caption{Solutions of $V_1^{-1} 1$ and $V_2^{-1} 1$. Naive inversion of a vector field yields meaningless embeddings.}
    \label{fig:naive VFI}
\end{figure}

\subsection{Additional sphere plots}
\begin{figure}[H]
    \hspace{-2cm}
    \includegraphics[width=6.2in]{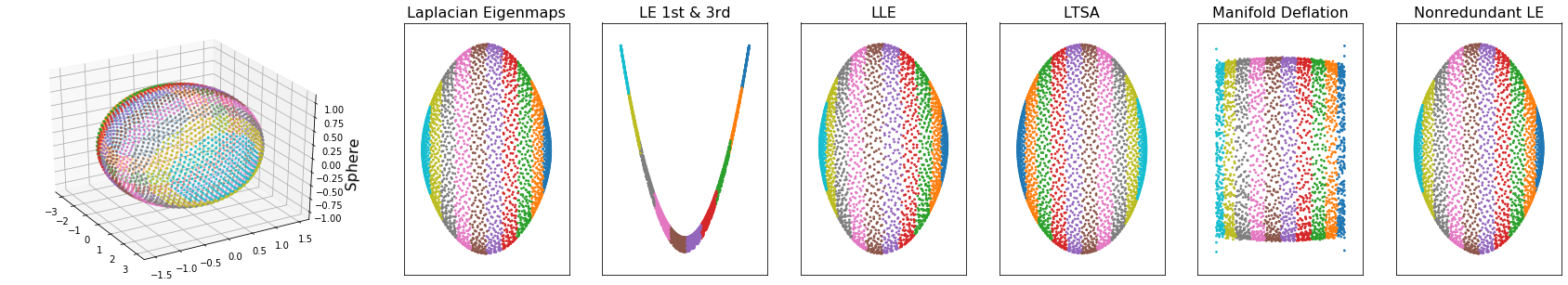}
    \caption{Embeddings of the sphere which show that only Manifold Deflation generates any form of meaningfully different embedding.}
    \label{fig:sphere embeddings}
\end{figure}

\end{document}